\DeclarePairedDelimiter\ceil{\lceil}{\rceil}
\DeclarePairedDelimiter\floor{\lfloor}{\rfloor}
\newcommand{\Li}{\phi^{\text{lin}}_t}
\newcommand{\Bx}{\bm{x}}
\newcommand{\E}{\mathbb{E}}
\newcommand{\R}{\ifmmode\mathbb{R}\else$\mathbb{R}$\fi}
\newcommand{\C}{\ifmmode\mathbb{C}\else$\mathbb{C}$\fi}
\newcommand{\N}{\ifmmode\mathbb{N}\else$\mathbb{N}$\fi}
\newcommand{\Q}{\ifmmode\mathbb{Q}\else$\mathbb{Q}$\fi}
\newcommand{\Z}{\ifmmode\mathbb{Z}\else$\mathbb{Z}$\fi}
\newcommand{\wKK}{\widetilde{K}}
\newcommand{\OO}{\mathcal{O}}
\newcommand{\xal}{{\bm{\alpha}}}
\newcommand{\xx}{{\bm{x}}}
\newcommand{\mrm}[1]{\mathrm{#1}}
\def\one{\mbox{1\hspace{-4.25pt}\fontsize{12}{14.4}\selectfont\textrm{1}}} 
\DeclareMathOperator*{\argmin}{arg\,min}
\newtheorem{theorem}{Theorem}
\newtheorem{definition}{Definition}
\newtheorem{lemma}{Lemma}
\newtheorem{proof}{Proof}
\icmltitlerunning{Reproducing Activation Function for Deep Learning}
\begin{document}

\twocolumn[
\icmltitle{Reproducing Activation Function for Deep Learning}



\icmlsetsymbol{equal}{*}

\begin{icmlauthorlist}
\icmlauthor{Senwei Liang}{equal,p}
\icmlauthor{Liyao Lyu}{equal,m}
\icmlauthor{Chunmei Wang}{tt}
\icmlauthor{Haizhao Yang}{p}
\end{icmlauthorlist}

\icmlaffiliation{m}{Department of Computational Mathematics, Science, and Engineering, Michigan State University, East Lansing, USA}
\icmlaffiliation{tt}{Department of Mathematics \& Statistics, Texas Tech University, Lubbock, USA}
\icmlaffiliation{p}{Department of Mathematics, Purdue University, West Lafayette, USA}

\icmlcorrespondingauthor{Haizhao Yang}{haizhao@purdue.edu}

\icmlkeywords{Deep Neural Network; Activation Function; Approximation; Neural Tangent Kernel; Partial Differential Equation; Data Reconstruction.}

\vskip 0.3in
]



\printAffiliationsAndNotice{\icmlEqualContribution} 

\begin{abstract}
We propose reproducing activation functions (RAFs) to improve deep learning accuracy for various applications ranging from computer vision to scientific computing. The idea is to employ several basic functions and their learnable linear combination to construct neuron-wise data-driven activation functions for each neuron. Armed with RAFs, neural networks (NNs) can reproduce traditional approximation tools and, therefore, approximate target functions with a smaller number of parameters than traditional NNs. In NN training, RAFs can generate neural tangent kernels (NTKs) with a better condition number than traditional activation functions lessening the spectral bias of deep learning. As demonstrated by extensive numerical tests, the proposed RAFs can facilitate the convergence of deep learning optimization for a solution with higher accuracy than existing deep learning solvers for audio/image/video reconstruction, PDEs, and eigenvalue problems. With RAFs, the errors of audio/video reconstruction, PDEs, and eigenvalue problems are decreased by over 14\%, 73\%, 99\%, respectively, compared with baseline, while the performance of image reconstruction increases by 58\%.
\end{abstract}
\vspace{-0.6cm}
\section{Introduction}
\label{sec:introduction}

Deep neural networks are an important tool for solving a wide range regression problems with surprising performance. 
For example, as a mesh-free representation of objects, scene geometry, and appearance, the so-called ``coordinate-based" networks \cite{tancik2020fourier} take low-dimensional coordinates as inputs and output an object value of the shape, density, and/or color at the given input coordinate. Another example is NN-based solvers for high-dimensional and nonlinear partial differential equations~(PDEs) in complicated domains~\cite{doi:10.1002/cnm.1640100303,han2018solving,RAISSI2019686,Khoo2017SolvingPP}. However, the optimization problem for above applications is highly non-convex making it challenging to obtain a highly accurate solution. Exploring different neural network architectures and training strategies for highly accurate solutions have been an active research direction~\cite{sitzmann2020implicit,tancik2020fourier,JAGTAP2020109136}.
 
We propose RAFs to improve deep learning accuracy. The idea is to employ several basic functions and their learnable linear combination to construct neuron-wise data-driven activation functions. Armed with RAFs, NNs can reproduce traditional approximation tools efficiently, e.g., orthogonal polynomials, Fourier basis functions, wavelets, radial basis functions. Therefore, NNs with the proposed RAF can approximate a wide class of target functions with a smaller number of parameters than traditional NNs. Therefore, the data-driven activation functions are called reproducing activation functions (RAFs). 

In NN training, RAFs can empirically generate NTKs with a better condition number than traditional activation functions lessening the spectrum bias of deep learning. NN optimization usually can only find the smoothest solution with the fastest decay in the frequency domain due to the implicit regularization of network structures~\cite{FP,cao2019towards,Neyshabur2017,Lei2018}, which can be generalized to PDE problems, e.g., the optimization and generalization analysis \cite{LuoYang2020} and the spectral bias \cite{wang2020pinns}. Therefore, designing an efficient algorithm to identify oscillatory or singular solutions to regression and PDE problems is challenging. 

\textbf{Contribution.} We summarize our contribution as follows, 

1. We propose RAFs and their approximation theory. NNs with this activation function can reproduce traditional approximation tools (e.g., polynomials, Fourier basis functions, wavelets, radial basis functions) and approximate a certain class of functions with exponential and dimension-independent approximation rates.

2. Empirically, RAFs can generate NTKs with a smaller condition number than traditional activation functions lessening the spectrum bias of NNs.

3.Extensive experiments on coordinate-based data representation and PDEs demonstrate the effectiveness of the proposed activation function.

\section{Related Works}
\label{sec:related}
\textbf{NN-based PDE solvers. } First of all, NNs as a mesh-free parametrization can efficiently approximate various high-dimensional solutions with dimension-independent approximation rates \cite{yarotsky2019,MO,HJKN19_814,Shen4,Shen5} and/or achieving exponential approximation rates  \cite{DBLP:journals/corr/abs-1807-00297,Opschoor2019,Shen4,Shen5}. Second, NN-based PDE solvers enjoy simple implementation and work well for nonlinear PDEs on complicated domains. There has been extensive research on improving the accuracy of these PDE solvers, e.g., improving the sampling strategy of SGD \cite{NakamuraZimmerer2019AdaptiveDL,Chen2019QuasiMonteCS} or the sample weights in the objective function \cite{Gu2020}, building physics-aware NNs  \cite{Cai2019APS,Liu2020Jul,gu2020structure}, combining traditional iterative solvers \cite{FP,huang2020int}.  

Take the example of boundary value problems (BVP) and the least squares method \cite{doi:10.1002/cnm.1640100303}. Consider the BVP
\begin{equation}\label{eqn:BVP}
\begin{split}
&\mathcal{D}u(\bm{x})=f(u(\bm{x}),\bm{x}),\text{~in~}\Omega,\\
&\mathcal{B}u(\bm{x})=g(\bm{x}),\text{~on~}\partial\Omega,
\end{split}
\end{equation}
where $\mathcal{D}: \Omega\rightarrow \Omega$ is a differential operator that can be nonlinear, $\Omega\subset \mathbb{R}^d$ is a bounded domain, and $\mathcal{B}u=g$ characterizes the boundary condition. Special network structures $\phi(\bm{x};\bm{\theta})$ parameterized by $\bm{\theta}$ can be proposed such that  $\phi(\bm{x};\bm{\theta})$ can satisfy boundary conditions, and the loss function over the given points $\{x_i\}_{i=1}^N$ becomes 
\begin{equation}\label{eqn:dloss2}
    \begin{aligned}
    \min_{\bm{\theta}}\hat{\mathcal{L}} (\bm{\theta}):= \frac{1}{N}\sum_{i=1}^N   \big(\mathcal{D}u(\bm{x}_i; \bm{\theta}) - f(\bm{x}_i)\big)^2 .
    \end{aligned}
\end{equation}

\textbf{Coordinate-based network. } Deep NNs were used as a mesh-free representation of objects, scene geometry, and appearance (e.g. meshes and voxel grids), resulting in notable performance compared to traditional discrete representations. This strategy is compelling in data compression and reconstruction, e.g., see \cite{tancik2020fourier,8953765,Jeruzalski2020NASANA,9157823,9010266,8954065,9156730,Saito2019PIFuPI,Sitzmann2019SceneRN}. 

\textbf{Neural tangent kernel.} NTK is a tool to study the training behavior of deep learning in regression problems and PDE problems~\cite{DBLP:journals/corr/abs-1806-07572,cao2019towards,LuoYang2020,wang2020pinns}. Let $\mathcal{X}$ be $\{\bm{x}_i\}_{i=1}^N$, $\mathcal{Y}$ be the set of function values, $\hat{\mathcal{J}}(\bm{\theta})$ be the square error loss for regression problem. Using gradient flow to analyze the training dynamics of $\hat{\mathcal{J}}(\bm{\theta})$, we have the following evolution equations:
$
\dot{\bm{\theta}}_t = -\nabla_{\bm{\theta}}\phi_t(\mathcal{X})^T\nabla_{\phi_t(\mathcal{X})}\hat{\mathcal{J}},
$
and
$
\dot{\phi}_t(\mathcal{X})=\nabla_{\bm{\theta}}\phi_t(\mathcal{X})\dot{\bm{\theta}}_t=-\hat{\Theta}_t(\mathcal{X},\mathcal{X})\nabla_{\phi_t(\mathcal{X})}\hat{\mathcal{J}},
$
where $\bm{\theta}_t$ is the parameter set at iteration time $t$, $\phi_t(\mathcal{X})=\text{vec}([\phi_t(\bm{x};\bm{\theta}_t)]_{\bm{x}\in\mathcal{X}})$ is the $N\times 1$ vector of concatenated function values for all samples, and $\nabla_{\phi_t(\mathcal{X})}\hat{\mathcal{J}}$ is the gradient of the loss with respect to the network output vector $\phi_t(\mathcal{X})$, $\hat{\Theta}_t:=\hat{\Theta}_t(\mathcal{X},\mathcal{X})$ in $\mathbb{R}^{N\times N}$ is the NTK at iteration time $t$ defined by
\[
\hat{\Theta}_t=\nabla_{\bm{\theta}}\phi_t(\mathcal{X})\nabla_{\bm{\theta}}\phi_t(\mathcal{X})^T.
\]
The NTK can also be defined for general arguments, e.g., $\hat{\Theta}_t(\bm{x},\mathcal{X})$ with $\bm{x}$ as a test sample location. 

If the following linearized network by Taylor expansion is considered,
$
\Li(\bm{x}):=\phi(\bm{x};\bm{\theta}_0) + \nabla_{\bm{\theta}} \phi(\bm{x};\bm{\theta}_0) \bm{\omega}_t,
$
where $\bm{\omega}_t:=\bm{\theta}_t-\bm{\theta}_0$ is the change in the parameters from their initial values. The closed form solutions are
\[
\bm{\omega}_t = -\nabla_{\bm{\theta}}\phi_0(\mathcal{X})^T \hat{\Theta}^{-1}_0 \left(I-e^{-\hat{\Theta}_0 t}\right)(\phi_0(\mathcal{X})-\mathcal{Y}),
\]
and
\begin{equation}\label{eqn:evs}
\Li(\bm{x}) - \phi_0(\bm{x})= \hat{\Theta}_0(\bm{x},\mathcal{X})\hat{\Theta}_0^{-1}\left(I-e^{-\hat{\Theta}_0 t}\right)(\mathcal{Y}-\phi_0(\mathcal{X})).
\end{equation}

There mainly two kinds of observations from \eqref{eqn:evs} from the perspective of kernel methods. The first one is through the eigendecomposition of the initial NTK. If the initial NTK is positive definite, $\Li$ will eventually converge to a neural network that fits all training examples and its generalization capacity is similar to kernel regression by \eqref{eqn:evs}. The error of $\Li$ along the direction of eigenvectors of $\hat{\Theta}_0$ corresponding to large eigenvalues decays much faster than the error along the direction of eigenvectors of small eigenvalues, which is referred to as the spectral bias of deep learning. The second one is through the condition number of the initial NTK. Since NTK is real symmetric, its condition number is equal to its largest eigenvalue over its smallest eigenvalue. If the initial NTK is positive definite, in the ideal case when $t$ goes to infinity, $\left(I-e^{-\hat{\Theta}_0 t}\right)(\phi_0(\mathcal{X})-\mathcal{Y})$ in \eqref{eqn:evs} approaches to $\phi_0(\mathcal{X})-\mathcal{Y}$ and, hence, $\Li(\bm{x})$ goes to the desired function value for $\bm{x}\in\mathcal{X}$. However, in practice, when $\hat{\Theta}_0$ is very ill-conditioned, a small approximation error in $\left(I-e^{-\hat{\Theta}_0 t}\right)(\phi_0(\mathcal{X})-\mathcal{Y})\approx \phi_0(\mathcal{X})-\mathcal{Y}$ may be amplified significantly, resulting in a poor accuracy for $\Li(\bm{x})$ to solve the regression problem.

The above discussion is for the NTK in regression setting. In the case of PDE solvers, we introduce the NTK below
\begin{equation}\label{eqn:NTKPDE}
\hat{\Theta}_t=\left(\nabla_{\bm{\theta}}\mathcal{D}\phi_t(\mathcal{X})\right)\left(\nabla_{\bm{\theta}}\mathcal{D}\phi_t(\mathcal{X})\right)^T,
\end{equation}
where $\mathcal{D}$ is the differential operator of the PDE. Similar to the discussion for regression problems, the spectral bias and the conditioning issue also exist in deep learning based PDE solvers by almost the same arguments.

\label{sec:pre}

\section{Reproducing Activation Functions}\label{sec:RAF}

\subsection{Abstract Framework}\label{sec:AF}

The concept of RAFs is to apply different activation functions in different neurons. Let $\mathcal{A}=\{\gamma_1(x),\dots,\gamma_P(x)\}$ be a set of $P$ basic activation functions. In the $i$-th neuron of the $\ell$-th layer, an activation function 
\begin{equation}\label{eqn:rpf}
\sigma_{i,\ell}(x)=\sum_{p=1}^P \alpha_{p,i,\ell} \gamma_p(\beta_{p,i,\ell} x)
\end{equation}
is applied, where $\alpha_{p,i,\ell}$ is a learnable combination coefficient and $\beta_{p,i,\ell}$ is a learnable scaling parameter. Let $\bm{\alpha}$ and $\bm{\beta}$ be the union of all learnable combination coefficients and scaling parameters, respectively, we use $\phi(\bm{x};\bm{\theta},\bm{\alpha},\bm{\beta})$ to denote an NN with $\bm{\theta}$ as the set of all other parameters.

In regression problems, given samples $\{\mathbf{x}_i,y_i\}_{i=1}^N$, the empirical loss with RAFs is
\begin{align}\label{eqn:pop2}
\hat{\mathcal{J}}(\bm{\theta},\bm{\alpha},\bm{\beta})=\frac{1}{2N}\sum_{i=1}^N \left| \phi(\mathbf{x}_i;\bm{\theta},\bm{\alpha},\bm{\beta})-y_i\right|^2.
\end{align}

\subsection{Examples and Reproducing Properties}\label{sec:RP}

\subsubsection{Example $1$: Sine-ReLU}
Sine-ReLU networks proposed in \cite{yarotsky2019} apply sine $\sin(x)$ or ReLU $\max\{0,x\}$ in each neuron. Instead, the proposed RAF here has a set of trainable parameters $\bm{\alpha}$ and $\bm{\beta}$. In fact, $\sin(x)$ can be replaced by any Lipschitz periodic function. Let $F_{r,d}$ be the unit ball of the $d$-dimensional Sobolev space $H^{r,\infty}([0,1]^d)$. We have the following theorem according to Thm.~$6.1$ of \cite{yarotsky2019}.

\begin{theorem}\label{thm:sinrelu} \textnormal{\textbf{(Dimension-Independent and Exponential Approximation Rate)}} Fix $r,d$. Let $\sigma$ be a Lipschitz periodic function with period $T$. Suppose $\sigma(x)>0$ for $x\in (0,T/2)$, $\sigma(x)<0$ for $x\in(T/2,T)$, and $\max_{x\in\mathbb{R}}\sigma(x)=-\min _{x\in\mathbb{R}}\sigma(x)$. For any sufficiently large integer $W>0$ and any $f(\bm{x})\in F_{r,d}$, there exists an NN $\phi(\bm{x};\bm{\theta},\bm{\alpha},\bm{\beta})$ such that: 1) The total number of parameters in $\{\bm{\theta},\bm{\alpha},\bm{\beta}\}$ is less than or equal to $W$; 2) $\phi(\bm{x};\bm{\theta},\bm{\alpha},\bm{\beta})$ is built with RAFs associated with $\mathcal{A}=\{\sigma(x),\max\{0,x\}\}$; 3) $\|f(\bm{x})-\phi(\bm{x};\bm{\theta},\bm{\alpha},\bm{\beta})\|_\infty \leq \text{exp}\left( -c_{r,d} W^{1/2}\right)$ 
    with a constant $c_{r,d}>0$ only depending on $r$ and $d$.
\end{theorem}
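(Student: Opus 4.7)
The plan is to derive Theorem \ref{thm:sinrelu} directly from Theorem~6.1 of Yarotsky by showing that the RAF network class contains the mixed $\sigma$/ReLU architectures used there, at only a constant-factor overhead in parameter count. Under the stated hypotheses on $\sigma$ (bounded Lipschitz periodic with half-period sign change and amplitude antisymmetry), Yarotsky's theorem produces, for any $f\in F_{r,d}$ and any sufficiently large $W_0$, a network $\phi_0$ with at most $W_0$ parameters whose neurons use either $\sigma(\cdot)$ or $\max\{0,\cdot\}$ as a pure activation, and with $\|f-\phi_0\|_\infty \leq \exp(-c'_{r,d}W_0^{1/2})$.

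First I would show that every such $\phi_0$ is realizable in the RAF framework of Section \ref{sec:AF}. In each neuron the RAF activation takes the form
\[
\sigma_{i,\ell}(x) = \alpha_{1,i,\ell}\,\sigma(\beta_{1,i,\ell}x) + \alpha_{2,i,\ell}\,\max\{0,\beta_{2,i,\ell}x\}.
\]
Choosing $(\alpha_{2,i,\ell},\beta_{2,i,\ell})=(0,0)$ collapses this into a pure $\sigma$-neuron with trainable inner and outer scales, while the symmetric choice yields a pure ReLU-neuron; the outer scales can be absorbed into the next layer's linear weights, so no expressive power is lost. This gives an exact embedding of Yarotsky's mixed architectures into our RAF class and transfers the error bound verbatim.

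The parameter bookkeeping is routine: each RAF neuron carries $2P=4$ extra activation parameters on top of its weights and bias, so a Yarotsky network with $W_0$ parameters and at most $W_0$ neurons embeds into a RAF network with at most $5W_0$ parameters. Taking $W\geq 5W_0$ and absorbing $\sqrt{5}$ into the constant yields the bound with $c_{r,d}=c'_{r,d}/\sqrt{5}$.

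The only step that could require actual work is the verification that Yarotsky's Theorem~6.1 applies to the full general class of $\sigma$ we allow, rather than only to $\sigma=\sin$. If it does (the hypothesis lists match), the reduction above is mechanical. Otherwise, the main obstacle is to adapt the bit-extraction core of Yarotsky's construction: using only the Lipschitz and sign-change hypotheses, I would build from $\sigma$ a near-triangular-wave primitive via $O(1)$ shifted-and-scaled copies of $\sigma$, and substitute this primitive for $\sin$ throughout the construction. Such a substitution inflates depth and width by a universal constant depending only on the Lipschitz constant and period of $\sigma$, which is again absorbed into $c_{r,d}$, preserving the exponential and dimension-independent rate.
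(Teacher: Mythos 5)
Your proposal follows essentially the same route as the paper, which proves Theorem~\ref{thm:sinrelu} simply by invoking Thm.~6.1 of \cite{yarotsky2019} and observing that RAFs with $\mathcal{A}=\{\sigma(x),\max\{0,x\}\}$ reproduce the mixed $\sigma$/ReLU architectures there (pure $\sigma$- or ReLU-neurons arise by zeroing one pair of $(\alpha,\beta)$, and the constant-factor parameter overhead is absorbed into $c_{r,d}$). Your explicit embedding, the $5W_0$ bookkeeping, and your flag about whether Yarotsky's construction extends from $\sin$ to a general Lipschitz periodic $\sigma$ with the stated sign and symmetry properties are all consistent with (and somewhat more careful than) the paper's one-line reduction.
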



There are other types of network structures utilizing both $\sin(x)$ and ReLU activation functions but for different application purposes and with different strategies, e.g.,  \cite{zhong2020reconstructing,mildenhall2020nerf,HAN2020,Liu2020Jul,Wang2020,tancik2020fourier}. 

\subsubsection{Example $2$: Floor-Exponential-Sign}
Recently, networks with super approximation power (e.g., an exponential approximation rate without the curse of dimensionality for H{\"o}lder continuous functions) have been proposed in \cite{Shen4,Shen5}, e.g., the Floor-Exponential-Sign Network that uses one of the following three activation functions in each neuron: 
\begin{equation*}
	\sigma_1(x):=\lfloor x\rfloor, \sigma_2(x):= 2^x, \sigma_3:=\mathcal{T}(x-\lfloor x\rfloor-\frac{1}{2}).
\end{equation*}
Here, 
$
\mathcal{T}(x):= \one_{x\geq 0} =\left\{\begin{matrix}
	1,\ x\ge 0,\\
	0,\ x<0.
\end{matrix}\right.
$
The proposed RAF has a set of trainable parameters $\bm{\alpha}$ and $\bm{\beta}$. By Thm.~$1.1$ in \cite{Shen5}, we have the following theorem.

\begin{theorem}\label{thm:fles} \textnormal{\textbf{(Dimension-Independent and Exponential Approximation Rate)}} Given $f$ in $C([0,1]^d)$ and $W\in \mathbb{N}^+$, there exists an NN $\phi(\bm{x};\bm{\theta},\bm{\alpha},\bm{\beta})$ of width $W$ and depth $4$ built with RAFs associated with $\mathcal{A}=\{\sigma_1(x),\sigma_2(x),\sigma_3(x)\}$ such that, for any $\bm{x}\in [0,1)^d$,
	\begin{equation*}
		|\phi(\bm{x};\bm{\theta},\bm{\alpha},\bm{\beta})-f(\bm{x})|\le 2\omega_f(\sqrt{d})2^{-W}+\omega_{f}(\sqrt{d}\,2^{-W})
	\end{equation*}
	 with at most $2W^2+(d+22)W+1$ parameters.
\end{theorem}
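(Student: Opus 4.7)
The plan is to reduce the claim to Thm.~$1.1$ of \cite{Shen5} by showing that the RAF from \eqref{eqn:rpf} with $\mathcal{A}=\{\sigma_1,\sigma_2,\sigma_3\}$ can exactly emulate a Floor-Exponential-Sign network at negligible parameter cost. The proof proceeds in two steps: first, invoke the existing approximation result to obtain a baseline network; second, embed that baseline network into the RAF framework and verify the parameter count.

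First, I would apply Thm.~$1.1$ of \cite{Shen5} to obtain a feed-forward network $\tilde{\phi}(\bm{x})$ of width $W$ and depth $4$ in which each hidden neuron uses exactly one of $\sigma_1$, $\sigma_2$, or $\sigma_3$, and which satisfies
\[
|\tilde{\phi}(\bm{x})-f(\bm{x})|\le 2\omega_f(\sqrt{d})2^{-W}+\omega_{f}(\sqrt{d}\,2^{-W})
\]
for every $\bm{x}\in[0,1)^d$. I would then construct $\phi(\bm{x};\bm{\theta},\bm{\alpha},\bm{\beta})$ by copying the weights and biases of $\tilde{\phi}$ into $\bm{\theta}$ and, for each hidden neuron whose baseline activation is $\sigma_{p^\star}$, setting $\alpha_{p,i,\ell}=\mathbf{1}_{\{p=p^\star\}}$ and $\beta_{p,i,\ell}=1$ for all $p\in\{1,2,3\}$. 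Because each $\sigma_p$ satisfies $\sigma_p(1\cdot x)=\sigma_p(x)$, the RAF in \eqref{eqn:rpf} collapses pointwise to $\sigma_{p^\star}$, so $\phi(\bm{x};\bm{\theta},\bm{\alpha},\bm{\beta})\equiv\tilde{\phi}(\bm{x})$ and the error bound transfers verbatim.

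Second, I would tally the parameter count. The baseline network from \cite{Shen5} contributes $2W^2$ connection weights from its widest layers together with $\mathcal{O}((d+c)W)+1$ linear, bias, and input/output parameters, with $c$ a small absolute constant. The RAF framework additionally introduces $2P=6$ scalars $\{\alpha_{p,i,\ell},\beta_{p,i,\ell}\}_{p=1}^{3}$ per hidden neuron; with at most $4W$ hidden neurons, this contributes at most $24W$ new parameters. Combining these $24W$ RAF parameters with the $\mathcal{O}(W)$ slack already present in the baseline count and folding them into the linear-in-$W$ coefficient yields the stated bound $2W^2+(d+22)W+1$.

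The main obstacle is bookkeeping rather than analysis: the heavy lifting---constructing a width-$W$, depth-$4$ network that achieves the stated modulus-of-continuity error via floor, exponential, and indicator operations---is already carried out in \cite{Shen5}. The genuinely new content is the observation that setting a single $\alpha$ to $1$ with the others zero (and $\beta=1$) reduces a RAF neuron to any chosen member of $\mathcal{A}$, together with the verification that the extra $\mathcal{O}(W)$ RAF parameters fit inside the coefficient $(d+22)$ without disturbing the leading $2W^2$ term.
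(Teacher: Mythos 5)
Your proposal takes essentially the same route as the paper, which offers no separate argument for this theorem beyond deriving it from Thm.~1.1 of \cite{Shen5}: one takes the Floor-Exponential-Sign network from that result and realizes it inside the RAF framework by setting, in each neuron, the coefficient $\alpha_{p^\star,i,\ell}=1$ for the activation actually used, the other $\alpha$'s to $0$, and all $\beta$'s to $1$, so the error bound transfers verbatim. One bookkeeping correction: ``depth $4$'' here corresponds to the three-hidden-layer construction of \cite{Shen5}, so the exact tally is $2W^2+(d+4)W+1$ ordinary weights and biases plus $6\cdot(3W)=18W$ RAF coefficients, which gives precisely $2W^2+(d+22)W+1$; your count of ``at most $4W$ hidden neurons, hence $24W$ extra parameters absorbed into slack'' would in fact overshoot the coefficient $(d+22)$, since the baseline linear term is already at least $(d+4)W$.
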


Here, $\omega_f(\cdot)$ is the modulus of continuity of $f$ defined as 
\begin{equation*}
	\omega_f(r)= \sup_{ \bm{x},\bm{y}\in [0,1]^d}\big\{|f(\bm{x})-f(\bm{y})|: \|\bm{x}-\bm{y}\|_2\le r\big\}
\end{equation*}
for any $r\ge0$, where $\|\bm{x}\|_2$ is the length of $\bm{x}\in \mathbb{R}^d$. 

\subsubsection{Example $3$: Poly-Sine-Gaussian}

Finally, we propose the poly-sine-Gaussian network using $\mathcal{A}=\{x,x^2,\sin(x), e^{-x^2}\}$ such that NNs can reproduce traditional approximation tools efficiently, e.g., orthogonal polynomials, Fourier basis functions, wavelets, radial basis functions, etc. Therefore, this new NN may approximate a wide class of target functions with a smaller number of parameters than existing NNs, e.g., ReLU NNs, since existing approximation theory with a continuous weight selection of ReLU NNs are established by using ReLU networks to approximate $x$ and $x^2$ as basic building blocks. We present several theorems proved in Appendix to illustrate the approximation capacity of this new network.

\begin{theorem}[Reproducing Polynomials]\label{thm:NNex2} 
 Assume $P(\xx)= \sum_{j=1}^J c_j \xx^{\xal_j}$ for $\xal_j\in \N^d$. For any $N,L,a,b\in \N^+$ such that $ab\geq J$ and $(L-2b-b\log_2 N)N\geq  b \max_j |\xal_j|$, there exists a poly-sine-Gaussian network $\phi$ with width $2Na+d+1$ and depth $L$ such that
    $
 \phi(\xx)=P(\xx)\quad \text{for any $\xx\in \R^d$.}
    $
\end{theorem}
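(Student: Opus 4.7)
The plan is to organize the $J$ monomials of $P$ into an $a \times b$ grid of assignments (feasible since $ab \geq J$; any surplus slots compute the zero monomial) and evaluate them in $a$ parallel lanes of width $2N$ each, with $b$ monomials per lane executed serially. Of the $2Na+d+1$ hidden units in every layer, I reserve $d+1$ channels carrying the inputs $x_1,\dots,x_d$ and the constant $1$ forward with the identity activation (permitted since $x\in\mathcal{A}$), along with a single accumulator channel per lane that sums $c_j \xx^{\xal_j}$ as each monomial is finished. A final affine output neuron totals the $a$ lane accumulators to yield $P(\xx)$.

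The key gadget is the polarization identity $uv = \tfrac{1}{4}\bigl((u+v)^2 - (u-v)^2\bigr)$, which turns $x^2 \in \mathcal{A}$ into a depth-$1$, width-$2$ multiplication gate; crucially, the subtraction combining the two squares can be absorbed into the linear pre-activation of the \emph{next} layer, so each successive multiplication costs exactly one more layer of depth. To compute a single monomial $\xx^{\xal_j}$ of degree $n_j := |\xal_j|$, I write it as a product of $n_j$ factors drawn from the identity channels, partition them into $N$ chunks of size at most $\lceil n_j/N\rceil$, run each chunk as a sequential multiplication chain inside its own width-$2$ sub-block (total width $2N$), and then merge the $N$ partial products with a balanced binary tree of further multiplications of depth $\lceil\log_2 N\rceil$, which still fits inside the lane's width-$2N$ block since each merge level uses width $2$ per pair. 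Together with one terminal layer to extract the final product and add $c_j\xx^{\xal_j}$ into the accumulator, the per-monomial depth is at most $\lceil n_j/N\rceil + \lceil\log_2 N\rceil + 2$.

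Running $b$ monomials serially per lane, the total depth used is at most $b\bigl(2 + \lceil\log_2 N\rceil + \lceil\max_j n_j/N\rceil\bigr)$, which fits in $L$ precisely under the hypothesis $(L - 2b - b\log_2 N)N \geq b\max_j|\xal_j|$; lanes that finish their $b$ monomials early are padded with identity layers so that all accumulators arrive at layer $L$ simultaneously. The main obstacle is the simultaneous width--depth bookkeeping: it is the presence of the identity activation in $\mathcal{A}$ that makes the forwarding of inputs, in-flight partial products, and the running accumulator between squaring layers free, and hence permits every non-trivial layer inside a lane to perform exactly one productive squaring step and saturate the depth budget. Once the width-$2N$ per-monomial subroutine and the lane scheduling are in place, exactness $\phi(\xx)=P(\xx)$ for all $\xx\in\R^d$ follows because every hidden value is, by construction, an exact polynomial in $\xx$ built from linear combinations and applications of $x^2$.
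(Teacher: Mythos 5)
Your construction is essentially the paper's: the polarization identity $xy=\frac{(x+y)^2-(x-y)^2}{4}$ as a width-$2$, depth-$1$ multiplication gadget built from the $x^2$ activation, identity channels forwarding the input, per-monomial evaluation by $N$ parallel chains of length about $\lceil|\xal_j|/N\rceil$ merged by a dyadic tree of depth $\lceil\log_2 N\rceil$, and an $a\times b$ arrangement of the $J$ monomials whose results are accumulated in an extra identity channel. (You transpose the grid, $a$ parallel lanes each running $b$ monomials serially instead of $b$ serial columns each running $a$ monomials in parallel, which is immaterial.)

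However, as written your bookkeeping does not meet the exact width and depth claimed in the theorem. Depth: you charge $\lceil|\xal_j|/N\rceil+\lceil\log_2 N\rceil+2$ per monomial, so your per-lane total is $b\left(\lceil\max_j|\xal_j|/N\rceil+\lceil\log_2 N\rceil+2\right)$, while the hypothesis $(L-2b-b\log_2 N)N\geq b\max_j|\xal_j|$ only guarantees $L\geq b\left(\max_j|\xal_j|/N+\log_2 N+2\right)$; the two ceilings can already consume the entire $+2$ slack, so the extra ``terminal layer'' you add per monomial can overflow the budget. Concretely, with $b=1$, $N=3$, $\max_j|\xal_j|=4$, the choice $L=5$ satisfies the hypothesis, but your count demands $6$ layers. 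The fix is the device you already invoke for the polarization subtraction: the final product and the addition of $c_j\xx^{\xal_j}$ into the accumulator are affine in the outputs of the last squaring layer and can be absorbed into the next layer's pre-activation, so the per-monomial depth is $\lceil|\xal_j|/N\rceil+\lceil\log_2 N\rceil\leq \max_j|\xal_j|/N+\log_2 N+2$, which is exactly the paper's accounting. Width: reserving $d+1$ forwarding channels plus one accumulator per lane gives $2Na+d+a+1>2Na+d+1$ hidden units per layer; the constant-$1$ channel is unnecessary (constants are supplied by biases), and since your lanes are synchronized a single shared accumulator channel suffices---at the end of each stage the $a$ finished terms are summed affinely into it---which restores the width to $2Na+d+1$, again as in the paper's proof, where the $d$ identity channels are likewise shared across the $a$ parallel blocks and one width-$1$ channel carries the running sum.
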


Thm.~\ref{thm:NNex2} characterizes how well poly-sine-Gaussian networks reproduce arbitrary polynomials including orthogonal polynomials. Compared to the results of ReLU NNs for polynomials in \cite{yarotsky2017,Shen3}, poly-sine-Gaussian networks require less parameters. Orthogonal polynomials are important tools for classical approximation theory and numerical computation. For example, the Chebyshev series lies at the heart of approximation theory.
In particular, for analytic functions, the truncated Chebyshev series defined as $f_n(x)=\sum_{k=0}^{n}c_kT_k(x/M)$ are \textit{exponentially accurate} approximations Thm.~8.2 \cite{trefethen2013}, where $T_k$ is the Chebyshev polynomial of degree $k$ defined on $[-1,1]$. More precisely, for some scalars $M\geq1$ and $s>1$, if we define
$
a_s^M=M\frac{s+s^{-1}}{2}, b_s^M = M\frac{s-s^{-1}}{2},
$
and the \textit{Bernstein $s$-ellipse scaled to $[-M,M]$},
\begin{align*}
E_s^M = \left\{x+iy\in\C\,:\,\frac{x^2}{(a_s^M)^2}+\frac{y^2}{(b_s^M)^2}=1\right\},
\end{align*}
then we have the following theorem.  

\begin{theorem}[Exponential Approximation Rate]\label{thm:analytic}
For any $M\geq1$, $s>1$, $C_f>0$, $0<\epsilon<1$, and any {real-valued} analytic function $f$ on $[-M,M]$ that is analytically continuable to the open ellipse $E_s^M$, where it satisfies $\vert f(x)\vert\leq C_f$, there is a poly-sine-Gaussian network $\phi$ with width $2N+2$ and depth $L$ such that $\left\Vert\phi(x) - f(x)\right\Vert_{L^\infty([-M,M])}\leq\epsilon$, where $N$ and $L$ are positive integers satisfying $(L-2n-2-(n+1)\log_2 N)N\geq n(n+1)$ and $n=\OO\left(\frac{1}{\log_2s}\log_2\frac{2C_f}{\epsilon}\right)$.
\end{theorem}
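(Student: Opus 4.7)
The plan is to reduce the approximation task to reproducing a polynomial, using classical Chebyshev theory to construct the polynomial and Theorem~\ref{thm:NNex2} to realize it by a poly-sine-Gaussian network. First I would invoke Theorem~8.2 of \cite{trefethen2013}: since $f$ is analytic on $[-M,M]$ with analytic continuation to the open Bernstein ellipse $E_s^M$ and $|f|\le C_f$ there, the truncated Chebyshev series $f_n(x)=\sum_{k=0}^{n}c_k T_k(x/M)$ satisfies
\[
\|f-f_n\|_{L^\infty([-M,M])}\le \frac{2C_f\,s^{-n}}{s-1}.
\]
Choosing $n=\OO\!\left(\tfrac{1}{\log_2 s}\log_2\tfrac{2C_f}{\epsilon}\right)$ large enough to force the right-hand side below $\epsilon$ matches the $n$-bound in the statement.

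Next I would rewrite $f_n$ in the monomial basis. Since $T_k$ is a polynomial of degree exactly $k$, $f_n$ is a univariate polynomial of degree at most $n$, expressible as $P(x)=\sum_{j=0}^{n}c'_j x^j$. In the notation of Theorem~\ref{thm:NNex2} this gives $d=1$, $J\le n+1$, and $\max_j|\xal_j|\le n$. Taking $a=1$ and $b=n+1$, the constraint $ab\ge J$ is immediate, and the depth constraint $(L-2b-b\log_2 N)N\ge b\max_j|\xal_j|$ reduces exactly to the stated hypothesis $(L-2(n+1)-(n+1)\log_2 N)N\ge n(n+1)$. Theorem~\ref{thm:NNex2} then produces a poly-sine-Gaussian network $\phi$ of width $2Na+d+1=2N+2$ and depth $L$ with $\phi(x)=f_n(x)$ identically on $\R$, and the triangle inequality yields $\|\phi-f\|_{L^\infty([-M,M])}\le\epsilon$.

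There is no substantive obstacle: the argument is a clean composition of a classical analytic-approximation estimate with the reproducing property already in hand. The only items requiring care are the index bookkeeping — verifying that $a=1$, $b=n+1$ simultaneously produces the width $2N+2$ and reproduces the depth inequality verbatim — and the mild observation that converting the Chebyshev expansion to the monomial basis does not inflate the degree beyond $n$. If a self-contained derivation of the Chebyshev estimate is desired, the coefficient bound $|c_k|\le 2C_f s^{-k}$ follows from deforming the contour of $c_k=\tfrac{2}{\pi}\int_{-1}^{1}f(Mx)T_k(x)(1-x^2)^{-1/2}\,dx$ into the Bernstein ellipse and summing the resulting geometric tail.
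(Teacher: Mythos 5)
Your proposal is correct and follows essentially the same route as the paper's own proof: truncate the Chebyshev series using the Bernstein-ellipse estimate to pick $n=\OO\bigl(\tfrac{1}{\log_2 s}\log_2\tfrac{2C_f}{\epsilon}\bigr)$, then invoke Theorem~\ref{thm:NNex2} with $d=1$, $a=1$, $b=n+1$ to reproduce the degree-$n$ polynomial exactly with width $2N+2$ and depth $L$ under the stated constraint, and finish with the triangle inequality. The extra remarks on the coefficient bound and monomial-basis bookkeeping are fine but not needed beyond what the paper already uses.
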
 

By choosing $N=\OO(n)$ and $ L=\OO(n\log_2(n))$ in Thm.~\ref{thm:analytic}, the width and depth of $\phi$ are  $\OO\left(\log_2\frac{1}{\epsilon}\right)$ and $\OO\left(\left(\log_2\frac{1}{\epsilon}\right)\log_2\left(\log_2\frac{1}{\epsilon}\right)\right)$, respectively, leading to a network size smaller than that of the ReLU NN in Thm.~$2.6$ in \cite{bandlimit}.

Next, we prove the approximation of poly-sine-Gaussian networks to generalized bandlimited functions below.

\begin{definition} {Let $d\geq 2$ be an integer, $M\geq 1$ be a scalar, and $B=[0,1]^d$. Suppose $K:\R\rightarrow\C$ is analytic and bounded by a constant $D_K\in(0,1]$ on $[-dM,dM]$ and $K$ satisfies the assumption of Thm.~$\ref{thm:analytic}$ for $s>1$ and $C_K>0$. We define the Hilbert space $\mathcal{H}_{K,M}(B)$ of generalized bandlimited functions via}
\begin{align*}
\mathcal{H}_{K,M}(B)&=\bigg\{ f(\bm{x})=\int_{[-M,M]^d}F(\bm{w})K(\bm{w}\cdot\bm{x})d\bm{w}\; \\&
\bigg\arrowvert \; F:[-M,M]^d\rightarrow\C\text{ is in }L^2([-M,M]^d) \bigg\},
\end{align*}
with $\langle f, g\rangle_{\mathcal{H}_{K,M}(B)}:=\int_{[-M,M]^d} F_f(\bm{w}) \overline{F}_g(\bm{w}) d\bm{w}$ and its induced norm $\|f\|_{\mathcal{H}_{K,M}(B)}$, where $F_f=\arg\min_{F\in S_f} \|F\|_{L^2([-M,M]^d)}$ and $S_f=\bigg\{F\;\bigg\arrowvert\; f(\bm{x})=\int_{[-M,M]^d}F(\bm{w})K(\bm{w}\cdot\bm{x})d\bm{w}\bigg\}$.

\end{definition}
Note that $\mathcal{H}_{K,M}(B)$ is a reproducing kernel Hilbert space (RKHS); a classical example of interest is $K(t)=e^{it}$. For simplicity, we will use $F$ instead of $F_f$ for $f\in\mathcal{H}_{K,M}(B)$, when the dependency on $f$ is clear.


\begin{theorem}[Dimension-Independent Approximation]\label{thm:bandlimited}
For any real-valued function $f$ in $\mathcal{H}_{K,M}(B)$, $M\geq 1$, $s>1$, $C_K>0$, and $d\geq 2$. Let us assume that $\int_{\R^d}\vert F(\bm{w})\vert d\bm{w} = \int_{[-M,M]^d}\vert F(\bm{w})\vert d\bm{w} = C_F$. For any measure $\mu$ and $\epsilon\in(0,1)$, there exists a poly-sine-Gaussian network $\phi$ on $B=[0,1]^d$, that has width $\OO\left(\frac{4C_F\sqrt{\mu(B)}}{\epsilon^2\log_2s}\log_2\frac{4C_F\sqrt{\mu(B)}C_K}{\epsilon}\right)$ and depth $\resizebox{.9\hsize}{!}{$\OO\left(\left(\frac{1}{\log_2s}\log_2\frac{4C_F\sqrt{\mu(B)}C_K}{\epsilon}\right) \log_2 \log_2\frac{4C_F\sqrt{\mu(B)}C_K}{\epsilon}\right)$}$ such that
$\left\Vert\phi - f\right\Vert_{L^2(\mu, B)} = \sqrt{\int_B\vert\phi(\bm{x}) - f(\bm{x})\vert^2 d\mu(\bm{x})}\leq\epsilon$.
\end{theorem}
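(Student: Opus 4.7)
The plan is to combine a Maurey-type (Monte Carlo) discretization of the defining integral of $f$ with an application of Thm.~\ref{thm:analytic} to the univariate kernel $K$. First, rewrite
\begin{equation*}
f(\bm{x}) \;=\; C_F\,\mathbb{E}_{\bm{w}\sim p}\bigl[\sigma(\bm{w})\,K(\bm{w}\cdot\bm{x})\bigr],
\end{equation*}
where $p(\bm{w}) := |F(\bm{w})|/C_F$ is a probability density on $[-M,M]^d$ (since $\int|F|=C_F$) and $\sigma(\bm{w}) := F(\bm{w})/|F(\bm{w})|$ is the phase. If $F$ or $K$ is complex while $f$ is real, split both into real and imaginary parts and apply the argument below to each of the four resulting real integrals; only multiplicative constants are affected.

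Next, invoke the standard Maurey empirical process lemma to extract sampling nodes $\bm{w}_1,\dots,\bm{w}_n \in [-M,M]^d$ and signs $\sigma_i \in \{-1,+1\}$ such that
\begin{equation*}
g_n(\bm{x}) \;:=\; \frac{C_F}{n}\sum_{i=1}^n \sigma_i\,K(\bm{w}_i \cdot \bm{x})
\end{equation*}
satisfies $\|f - g_n\|_{L^2(\mu, B)} \le C_F C_K \sqrt{\mu(B)}/\sqrt{n}$. Choose $n$ so that this bound is at most $\epsilon/2$; this pins down the leading $n$-factor in the final network width. For every $\bm{x}\in[0,1]^d$ and $\bm{w}\in[-M,M]^d$, the Cauchy--Schwarz inequality gives $\bm{w}\cdot\bm{x}\in[-dM,dM]$, which is exactly the interval on which $K$ is analytic and on which the hypothesis of Thm.~\ref{thm:analytic} applies with the same $s$ and $C_K$.

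Now apply Thm.~\ref{thm:analytic} to $K$ on $[-dM,dM]$ with target uniform accuracy $\delta := \epsilon/(2C_F\sqrt{\mu(B)})$, obtaining a poly-sine-Gaussian subnetwork $\psi$ of width $\OO(n_K)$ and depth $\OO(n_K\log_2 n_K)$, where $n_K = \OO\!\bigl(\tfrac{1}{\log_2 s}\log_2(C_K/\delta)\bigr)$. Because the dimension enters only through the interval half-width $dM$ and Thm.~\ref{thm:analytic} depends on this only logarithmically (through $C_K$), dimension independence is preserved. Assemble
\begin{equation*}
\phi(\bm{x}) \;:=\; \frac{C_F}{n}\sum_{i=1}^n \sigma_i\,\psi(\bm{w}_i\cdot\bm{x}).
\end{equation*}
The $n$ inner products $\bm{w}_i\cdot\bm{x}$ are affine functionals formed in the input layer, the $n$ copies of $\psi$ run in parallel, and the final $C_F/n$-weighted sum is a single affine layer; this yields a poly-sine-Gaussian network of total width $\OO(n\cdot n_K)$ and depth $\OO(n_K\log_2 n_K)$, in agreement with the stated bounds. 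The triangle inequality closes the argument: $\|f-\phi\|_{L^2(\mu,B)} \le \|f-g_n\|_{L^2(\mu,B)} + C_F\delta\sqrt{\mu(B)} \le \epsilon/2 + \epsilon/2 = \epsilon$.

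The main obstacle is the Maurey step: one has to show that the sampling nodes can be extracted with binary signs (rather than with arbitrary complex coefficients), so that what remains to approximate is exactly $n$ translated copies of the one-dimensional function $K$ and the resulting architecture is still a single poly-sine-Gaussian network. One also has to carry constants cleanly through the real/imaginary decomposition of $K$ and $F$ without picking up any factor growing with $d$. Every other step is quantitative bookkeeping layered on top of Thm.~\ref{thm:analytic}.
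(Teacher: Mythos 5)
Your proposal follows essentially the same route as the paper's proof: write $f$ through its defining integral representation, discretize it by a Maurey-type argument into $\OO(1/\epsilon_0^2)$ sampled terms, approximate the univariate kernel (split into $K_R$ and $K_I$) by Thm.~\ref{thm:analytic}, and assemble the copies as parallel subnetworks, so the plan is sound and yields the stated width and depth up to constants. Two minor remarks: the binary-sign extraction you single out as the main obstacle is unnecessary, since the paper's Maurey step simply keeps real coefficients $b_j$ with $\sum_j |b_j|\le C_F$ and phases $\beta_j$ that are absorbed into the final affine (output) layer at no architectural cost; and using the bound $D_K\le 1$ from the definition of $\mathcal{H}_{K,M}(B)$ rather than $C_K$ in the variance bound avoids the extra $C_K^2$ factor your choice of $n$ would introduce into the width.
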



Poly-sine-Gaussian networks can also reproduce typical applied harmonic analysis tools as in the following lemma.

\begin{lemma}\label{lem:NNex3}
\begin{enumerate}[label=(\roman*)]
\item Poly-sine-Gaussian networks can reproduce all basis functions in the discrete cosine transform and the discrete windowed cosine transform with a Gaussian window function in an arbitrary dimension.
\item Poly-sine-Gaussian networks with complex parameters can reproduce all basis functions in the discrete Fourier transform and the discrete Gabor wavelet transform in an arbitrary dimension.
\end{enumerate}
\end{lemma}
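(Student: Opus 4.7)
My plan is to reduce both claims to three primitive constructions that a poly-sine-Gaussian network carries out exactly using the activation set $\mathcal{A}=\{x,x^2,\sin(x),e^{-x^2}\}$: (a) synthesizing a single shifted sinusoid or Gaussian bump, (b) multiplying two previously-computed scalars through a polarization gadget built from the $x^2$ activation, and (c) iterating (b) to form $d$-fold tensor products. Once these three building blocks are in place, every basis function listed in the lemma factors as a product of one-dimensional pieces that the network can already produce, so the reproducing claims follow by direct composition.

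For step (a), since every RAF is $\sum_p \alpha_{p,i,\ell}\gamma_p(\beta_{p,i,\ell} x)$, setting all $\alpha$'s but one to zero selects any single basic activation and bakes a scale into $\beta$; combined with the biases in the preceding affine map, a single neuron realizes $\sin(\omega x+\varphi)$, so in particular $\cos(\omega x)=\sin(\omega x+\pi/2)$ and $e^{-(\omega x+\varphi)^2}$ are reproduced exactly. For step (b), I use the polarization identity
\[
uv=\tfrac{1}{4}\bigl((u+v)^2-(u-v)^2\bigr),
\]
which only requires the affine pre-activation to form $u+v$ and $u-v$, the $x^2$ activation in two neurons to square them, and a linear read-out layer to combine. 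Thus every layer can multiply any two linear combinations of its inputs without incurring any approximation error.

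For step (c) I induct: given $d$ one-dimensional pieces $f_1(x_1),\dots,f_d(x_d)$ already produced (each by one neuron as in (a)), I pair them up and apply the polarization gadget $\lceil\log_2 d\rceil$ times to obtain $\prod_{j=1}^d f_j(x_j)$ exactly. Applying this with $f_j(x_j)=\cos(k_j\pi x_j)$ reproduces every DCT basis function in arbitrary dimension; taking $f_j(x_j)=e^{-(x_j-x_{0,j})^2/\sigma^2}\cos(k_j x_j)$, which is itself a product of a Gaussian primitive and a cosine primitive obtained via one extra application of the multiplication gadget, reproduces the windowed cosine basis with Gaussian window. This establishes part (i).

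Part (ii) is then essentially free: with complex-valued $\alpha$ and $\beta$ allowed, the affine-plus-RAF layer can form $e^{i\omega x}=\cos(\omega x)+i\sin(\omega x)$ in a single neuron (or as a two-term linear combination), and the tensor-product construction above yields $\prod_j e^{ik_j x_j}=e^{i\bm{k}\cdot\bm{x}}$, i.e.\ every DFT basis function. The Gabor wavelet $e^{-\|\bm{x}-\bm{x}_0\|^2/\sigma^2}e^{i\bm{\omega}\cdot\bm{x}}$ is reproduced by multiplying the Gaussian (one Gaussian neuron per coordinate, combined via the tensor-product gadget) with the complex exponential through one additional application of the polarization identity, now over $\mathbb{C}$. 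The main subtlety I anticipate is bookkeeping the depth and width in the $d$-fold tensor construction, and verifying that the polarization identity remains valid across layers despite the fixed RAF form of each neuron — but once one notes that selecting $\gamma_p(x)=x^2$ with $\beta=1$ turns a neuron into an exact squaring unit, this is a routine composition argument.
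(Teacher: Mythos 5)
Your proposal is correct and follows essentially the same route as the paper's proof, which rests on the same three facts: affine pre-activation maps provide translation and dilation, the Gaussian activation provides the window/localization, and the $x^2$ activation yields exact multiplication (via the polarization identity of Lem.~\ref{lem:NNex2}(iii)), iterated in a dyadic tree for the $d$-fold tensor products. You simply spell out the details that the paper's one-line proof leaves implicit.
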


\begin{figure*}[ht]
\centering
\begin{adjustbox}{width=0.9\textwidth, center}
	\includegraphics{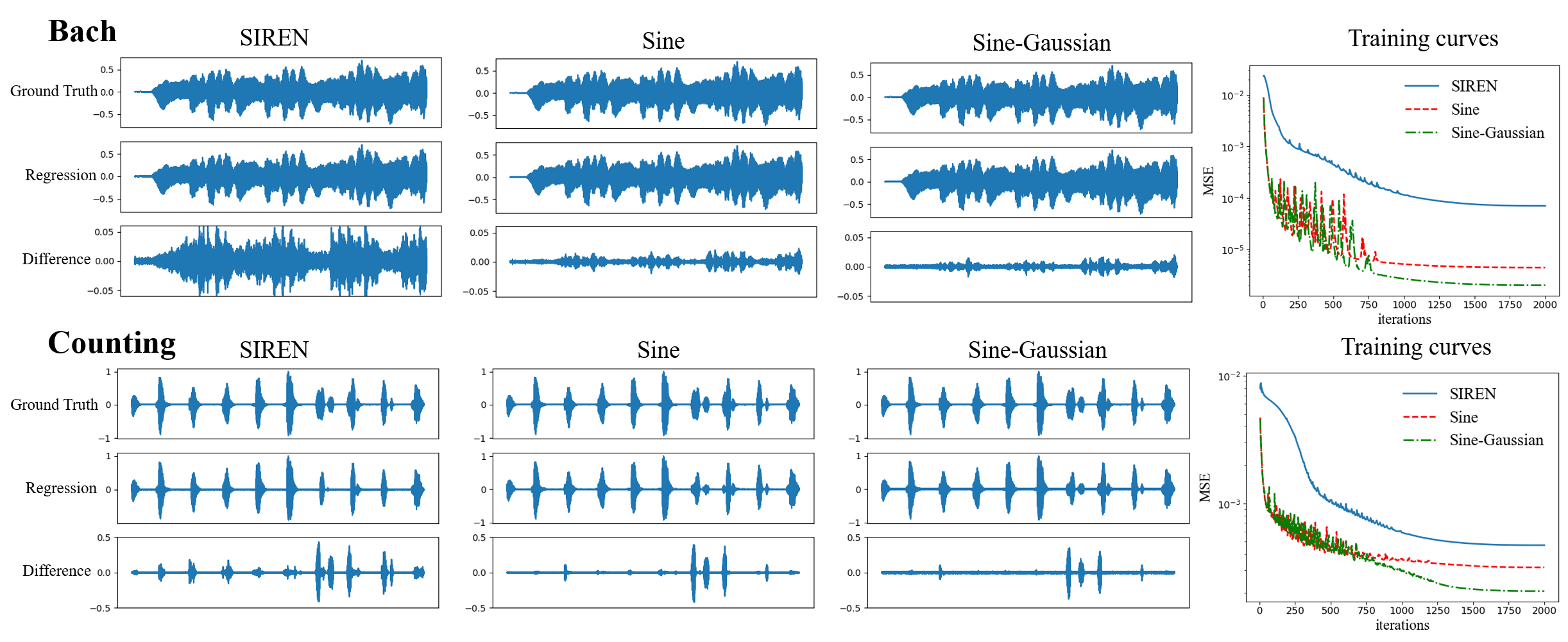}
\end{adjustbox}
\vspace{-0.8cm}
\caption{The comparison of fitted signal and training curves on Bach and Counting for SIREN and our RAF (Sine and Sine-Gaussian). }
\label{fig:audio}
\vspace{-0.4cm}
\end{figure*}

Lem.~\ref{lem:NNex3} above implies that poly-sine-Gaussian networks may be useful in many computer vision and audio tasks involving Fourier transforms and wavelet transforms. Due to the advantage of wavelets to represent functions with singularity, poly-sine-Gaussian networks may also be useful in representing functions with singularity. We would like to highlight that the Gaussian function may not be the optimal choice in the concept of RAF. Other window functions in wavelet analysis may provide better performance and this would be problem-dependent. 

Finally, we have the next lemma for radial basis functions. 

\begin{lemma}\label{lem:NNex4} 
Poly-sine-Gaussian networks can reproduce Gaussian radial basis functions and approximate radial basis functions defined on a bounded closed domain with analytic kernels with an exponential approximation rate.
\end{lemma}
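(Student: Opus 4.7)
The plan is to prove the two halves of the lemma as consequences of the polynomial and multiplication structure of poly-sine-Gaussian networks together with Theorem~\ref{thm:analytic}.

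For exact reproduction of a Gaussian radial basis function $\psi(\bm{x}) = e^{-\|\bm{x}-\bm{c}\|^2/\sigma^2}$, I would exploit the tensor-product factorization
\[
e^{-\|\bm{x}-\bm{c}\|^2/\sigma^2} \;=\; \prod_{i=1}^{d} e^{-(x_i-c_i)^2/\sigma^2}.
\]
Each factor is produced by a single RAF neuron whose pre-activation realizes the affine map $x_i \mapsto x_i - c_i$, with scaling parameter $\beta = 1/\sigma$ placed in front of the $e^{-x^2}$ basic activation; zeroing out the coefficients of the other basic activations makes the neuron output exactly $e^{-(x_i-c_i)^2/\sigma^2}$. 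The resulting $d$ scalars are then multiplied exactly using the polarization identity $uv = \tfrac{1}{4}\bigl((u+v)^2 - (u-v)^2\bigr)$, which a poly-sine-Gaussian layer implements in a single step via the $x^2$ basic activation. A binary tree of $\lceil\log_2 d\rceil$ such multiplication blocks therefore produces the full product exactly, giving a network that equals the Gaussian RBF identically on $\mathbb{R}^d$.

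For the approximation statement, let $\psi(\bm{x}) = K(\|\bm{x}-\bm{c}\|)$ on the bounded closed domain $\Omega \subset \mathbb{R}^d$ with $K$ an analytic kernel. Writing $t := \|\bm{x}-\bm{c}\|^2 = \sum_{i=1}^{d} (x_i-c_i)^2$, one layer of $x^2$-activated neurons followed by a linear combination produces $t$ exactly, and $t$ takes values in a closed interval $[t_{\min}, t_{\max}] \subset [0, \mathrm{diam}(\Omega)^2]$. Setting $\Phi(t) := K(\sqrt{t})$, I would verify that $\Phi$ is analytic on this interval and hence admits analytic continuation to a Bernstein ellipse after an affine rescaling to some $[-M,M]$; Theorem~\ref{thm:analytic} then provides, for each $\epsilon\in(0,1)$, a poly-sine-Gaussian sub-network $\widetilde{\Phi}$ of width $O(\log_2(1/\epsilon))$ and depth $O\bigl(\log_2(1/\epsilon)\log_2\log_2(1/\epsilon)\bigr)$ with $\|\widetilde{\Phi}-\Phi\|_{L^\infty([t_{\min},t_{\max}])}\le\epsilon$. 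Composition with the ``compute $t$'' layer yields a poly-sine-Gaussian approximation of $\psi$ on $\Omega$ with the claimed exponential rate.

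The main subtlety, and the one place where care is needed, is the passage from $K$ to $\Phi(t)=K(\sqrt{t})$: the square root is not analytic at $t=0$, so if $\Omega$ contains the center $\bm{c}$ one must assume $K$ is analytic \emph{as a function of $r^2$} (equivalently, $K$ is even and analytic in $r$). All classical analytic RBFs of interest—Gaussian, multiquadric, inverse multiquadric, inverse quadratic—satisfy this, so the statement covers them; for other analytic $K$ the conclusion still holds on any subdomain bounded away from $\bm{c}$. Once this analyticity is secured, both halves of the lemma follow directly from Theorem~\ref{thm:analytic} and the exact multiplication gadget based on the $x^2$ activation.
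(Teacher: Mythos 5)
Your proposal is correct and follows essentially the same route as the paper, whose proof simply invokes Lem.~\ref{lem:NNex2} (exact $x^2$/multiplication and Gaussian reproduction), Thm.~\ref{thm:NNex2}, and the proof of Thm.~\ref{thm:analytic}, exactly the ingredients you assemble. Your extra remark that $K(\sqrt{t})$ must be analytic at $t=0$ (i.e., $K$ analytic as a function of $r^2$) is a legitimate refinement of a point the paper's one-line proof leaves implicit.
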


We will end this section with an informal discussion about the NTK of poly-sine-Gaussian networks. As we shall discuss in Section \ref{sec:related}, deep learning can be approximated by kernel methods with a kernel $\hat{\Theta}_0$ in \eqref{eqn:evs}. Therefore, from the perspective of kernel regression for regressing $f(\Bx)$ with training samples $\{(\Bx_i,f(\Bx_i))\}_{i=1}^N$, $\hat{\Theta}_0(\Bx,\Bx_i)$ quantifies the similarity of the point $\Bx$ and a training point $\Bx_i\in \mathcal{X}$ and, hence, serves as a weight of $f(\Bx_i)$ in a toy regression formulation: $\phi(\Bx;\bm{\omega}):=\sum_{i=1}^N \omega_i f(\Bx_i)\hat{\Theta}(\Bx,\Bx_i)$, where $\bm{\omega}=[\omega_1,\dots,\omega_N]$ is a set of learnable parameters and $\phi(\Bx;\bm{\omega})$ is the approximant of the target function $f(\Bx)$. To enable a kernel method to learn both smooth functions and highly oscillatory functions, the kernel function $\hat{\Theta}$ should have a widely spreading Fourier spectrum. By using $\sin(\beta x)$ with a tunable $\beta$ in the poly-sine-Gaussian, the poly-sine-Gaussian network could learn an appropriate kernel for both kinds of functions. Similarly, by using $\text{exp}(-(\beta x)^2))$ with a tunable $\beta$ in the poly-sine-Gaussian, the poly-sine-Gaussian network could learn an appropriate kernel for both smooth and singular functions. We will provide numerical examples to demonstrate this empirically in the next section.

\begin{table*}
  \begin{minipage}[t]{0.6\textwidth}
  \centering
  \captionof{table}{The comparison of PSNR/SSIM of the fitted images using different activation functions. The larger these numbers are, the better the performance is.}
  \scalebox{0.85}{
\begin{tabular}{ccccc}
    \toprule
    Activation & Camera & Astronaut & Cat   & Coin \\
    \midrule
    SIREN & 45.80/0.9913 & 44.84/0.9962 & 49.58/0.9970 & 43.05/0.9868 \\
    Sine  & 60.60/0.9995 & 59.37/0.9997 & 65.94/0.9999 & 62.66/0.9998 \\
    Poly-Sine & 61.21/0.9996 & 59.99/0.9997 & 66.41/0.9999 & 63.57/0.9998 \\
    Poly-Sine-Gauss. & 73.80/1.0000 & 70.98/1.0000 & 82.55/1.0000 & 74.92/1.0000 \\
    \bottomrule
    \end{tabular}%
}
\label{tab:image}
    \end{minipage}
  \hspace{0.4cm}
  \begin{minipage}[t]{0.35\textwidth}
  \captionof{table}{The comparison of PSNR of videos fitted by different activation functions. The mean and average are computed over $250$ frames.}
  \scalebox{0.9}{
\begin{tabular}{ccc}
    \toprule
    Activation & Mean PSNR & Std PSNR \\
    \midrule
    SIREN & 32.17  & 2.16  \\
    Sine-Gaussian  & 32.79  & 2.10  \\
    \bottomrule
    \end{tabular}%
}
\label{tab:video}
\end{minipage}
\end{table*}


\section{Numerical Results}\label{sec:result}
In this section, we will illustrate the advantages of RAFs in two kinds of applications, data representation and scientific computing. The optimal choice of basic activation functions would be problem-dependent. 

\subsection{Coordinate-based Data Representation}
We verify the performance of RAFs on data representations using coordinate-based NNs. Mean square error~(MSE) quantifies the difference between the ground truth and the NN output. Standard NNs, e.g., ReLU NNs, were shown to have poor performance to fit high-frequency components of signals~\cite{sitzmann2020implicit,tancik2020fourier}. SIREN activation function~\cite{sitzmann2020implicit}, i.e., $\sin(30x)$, improves the ability of NNs to represent complex signals. 

As we discussed in Section \ref{sec:RP}, the SIREN function is a special case of the poly-sine-Gaussian activation function in our framework. We will show that poly-sine-Gaussian activation function can provide better performance than SIREN when the combination coefficients $\bm{\alpha}$ and the scaling parameters $\bm{\beta}$ are specified or trained appropriately in a problem-dependent manner. We follow the official implementation of SIREN on representations of audio, image, and video signal~(refer to~\cite{sitzmann2020implicit} for details). The main difference between the SIREN code and ours is the activation function. All trainable parameters are trained to minimize the empirical loss function in \eqref{eqn:pop2}. The NN is optimized by Adam optimizer with an initial learning rate $10^{-4}$ and cosine learning rate decay. 

\subsubsection{Audio Signal} 
\label{sec:audio}
We start from modeling audio signals on two audio clips, Bach and Counting as shown in Figure~\ref{fig:audio}. An NN is trained to regress from a one-dimensional time coordinate to the corresponding sound level. Note that audio signals are purely oscillatory signals. Therefore, in the reproducing activation framework, $x$ and $x^2$ are not necessary. We apply two forms of RAFs, Sine and Sine-Gaussian. The Sine one is set as  
$
    \alpha_1  \sin(\beta_1x),
$
while the Sine-Gaussian one is set as 
$
    \alpha_1  \sin(\beta_1x)+\alpha_2\text{exp}(-x^2/(2\beta_2^2)),
$
where $\alpha_1$ is initialized as $\mathcal{N}(2,0.1)$, $\alpha_2$ is initialized as $\mathcal{N}(1.0,0.1)$, $\beta_1$ is initialized as $\mathcal{N}(30,0.001)$, and $\beta_2$ is initialized with a uniform distribution $\mathcal{U}(0.01,0.05)$. We use a 3-hidden-layer neural network with $256$ neurons per layer to fit the audio signal following the network structure of SIREN. The NNs are trained for 2000 iterations. Figure~\ref{fig:audio} displays the fitted signals and training curves.  Figure~\ref{fig:audio} shows our method has the capacity of modeling the audio signals more accurately than SIREN and leads to a smaller error in regression. Besides, our RAFs can converge to a better local minimum at a faster speed compared with SIREN. Moreover, we can see the add-in Gaussian function enhances the fitting ability. 

\begin{figure}[ht]
\centering
\begin{adjustbox}{width=0.25\textwidth, center}
	\includegraphics{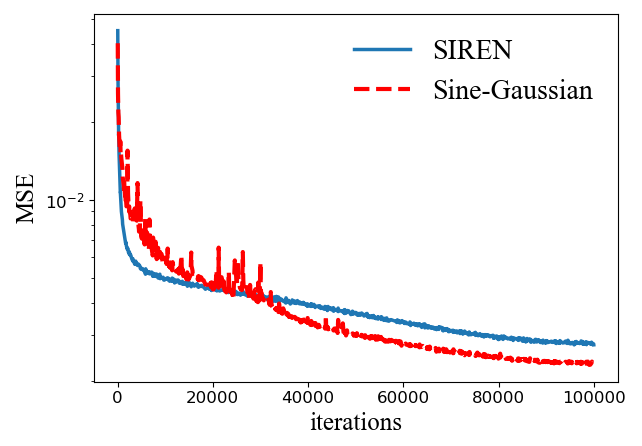}
\end{adjustbox}
\vspace{-0.8cm}
\caption{Comparison of training curves on video fitting for different activation functions.}
\label{fig:bikevideo}
\end{figure}


\begin{table*}[ht]
  \centering
  \caption{The relative $L^2$ error of different activation functions for the regression problem, Poisson equation \eqref{eqn:regular}, PDE with low regularity \eqref{eqn:singular}, PDE with an oscillatory solution \eqref{eqn:oscillation} and Eigenvalue problem with $d=5$ or $d=10$. $\oplus$ means the concatenation of different activation functions in the network.}
  \begin{adjustbox}{width=0.9\textwidth, center}
      \begin{tabular}{ccccccc}
    \toprule
    Examples & Regression & Poisson Equation & Low Regularity & Oscillation & Eigen. $(d=5)$ & Eigen. $(d=10)$\\
    \midrule
    $\text{ReLU}$ & 6.61 e-02 & - & - & - & 6.38e-03 & 4.59 e-03\\
    $\text{ReLU}^3$ & 1.13 e-01 & 1.38 e-03 & 1.49 e-03 & 3.16 e-05 & 0.307 & 0.223\\
    $x \oplus x^2$  & 3.71 e-01 & 4.48 e-04 & 4.39 e-02 & 9.46 e-02 & -&-\\
    $x \oplus x^2 \oplus \text{ReLU}$ & 9.98 e-02 & 4.48 e-04 & 6.06 e-01 & 3.81 e+00 & -&-\\
   	$x \oplus x^2 \oplus \text{ReLU}^3$ & 9.12 e-02  & 1.40 e-03 & 9.48 e-04 & 3.15 e-05 &- &-\\
    $x \oplus x^2 \oplus \sin(x)$ & 9.07 e-02 & 4.18 e-04 & 2.42 e-03 & 4.69 e-06 & - & -\\
    $x \oplus x^2 \oplus \sin(x) \oplus$ Gaussian & 3.46 e-02 & 6.87 e-05 & 1.91 e-04 & 3.35 e-06 & 2.09 e-03 & 1.10 e-03 \\
    Rational \cite{boulle2020rational}& 3.94 e-02 & -     & -     & - & - & - \\
    \bottomrule
    \end{tabular}%
  \end{adjustbox}
  \label{tab:pdes}%
  \vspace{-0.5cm}
\end{table*}%
\begin{figure*}[htbp]
\centering     
\subfigure[Regression]{\label{fig:regressions}\includegraphics[width=.2\textwidth]{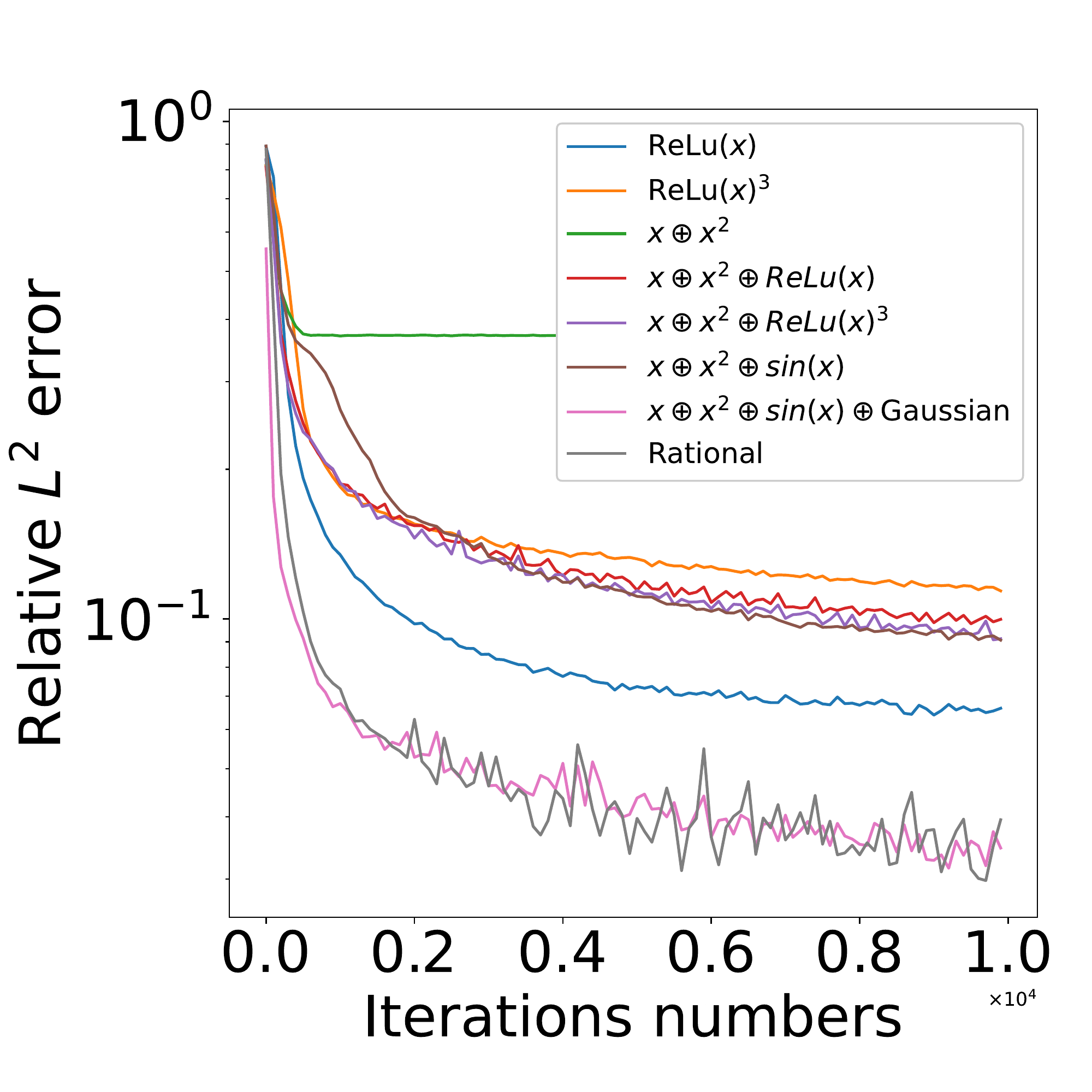}}\vspace{-0.35cm}
\subfigure[Poisson Equation]{\label{fig:poisson}\includegraphics[width=.2\textwidth]{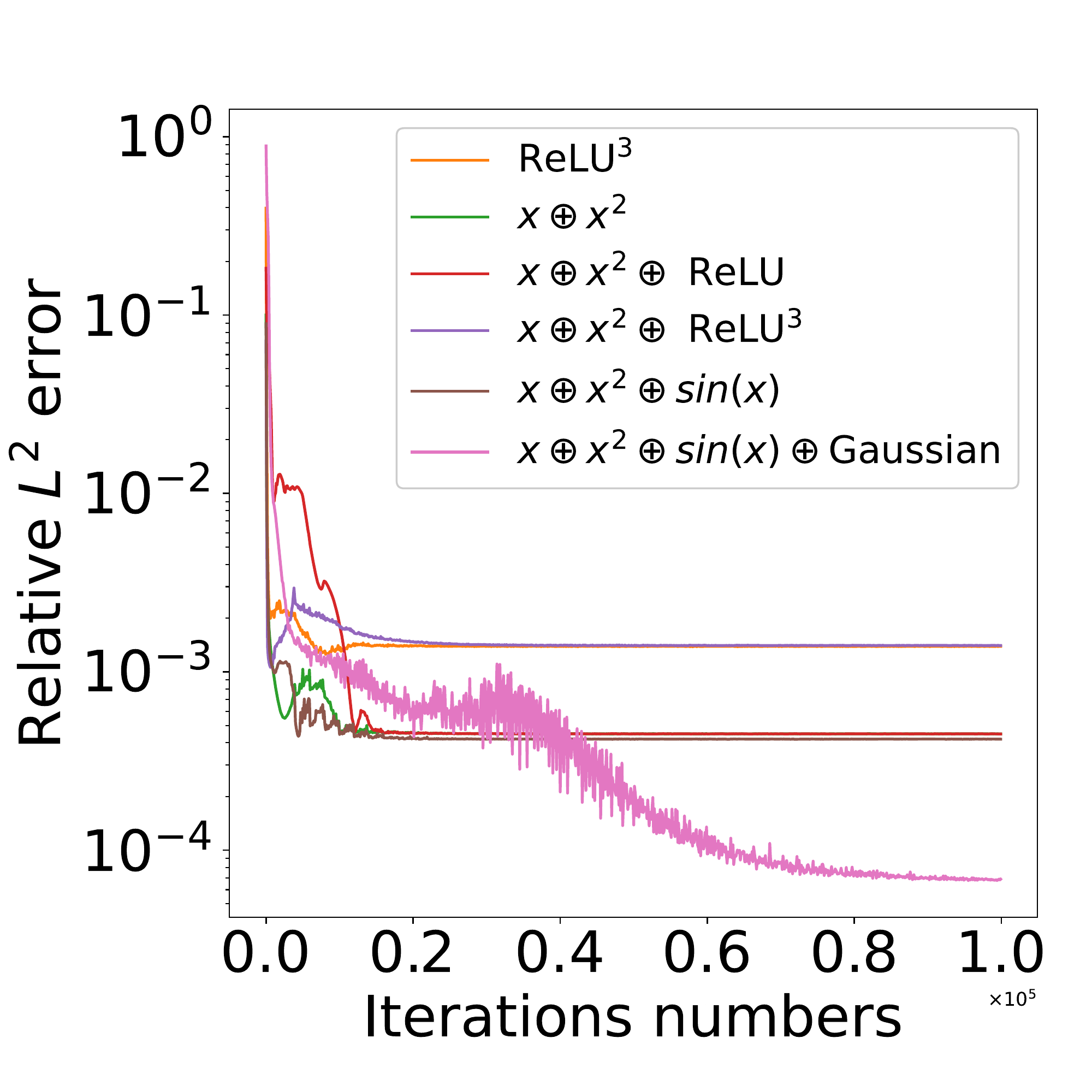}}
\subfigure[Low Regularity]{\label{fig:lowregularity}\includegraphics[width=.2\textwidth]{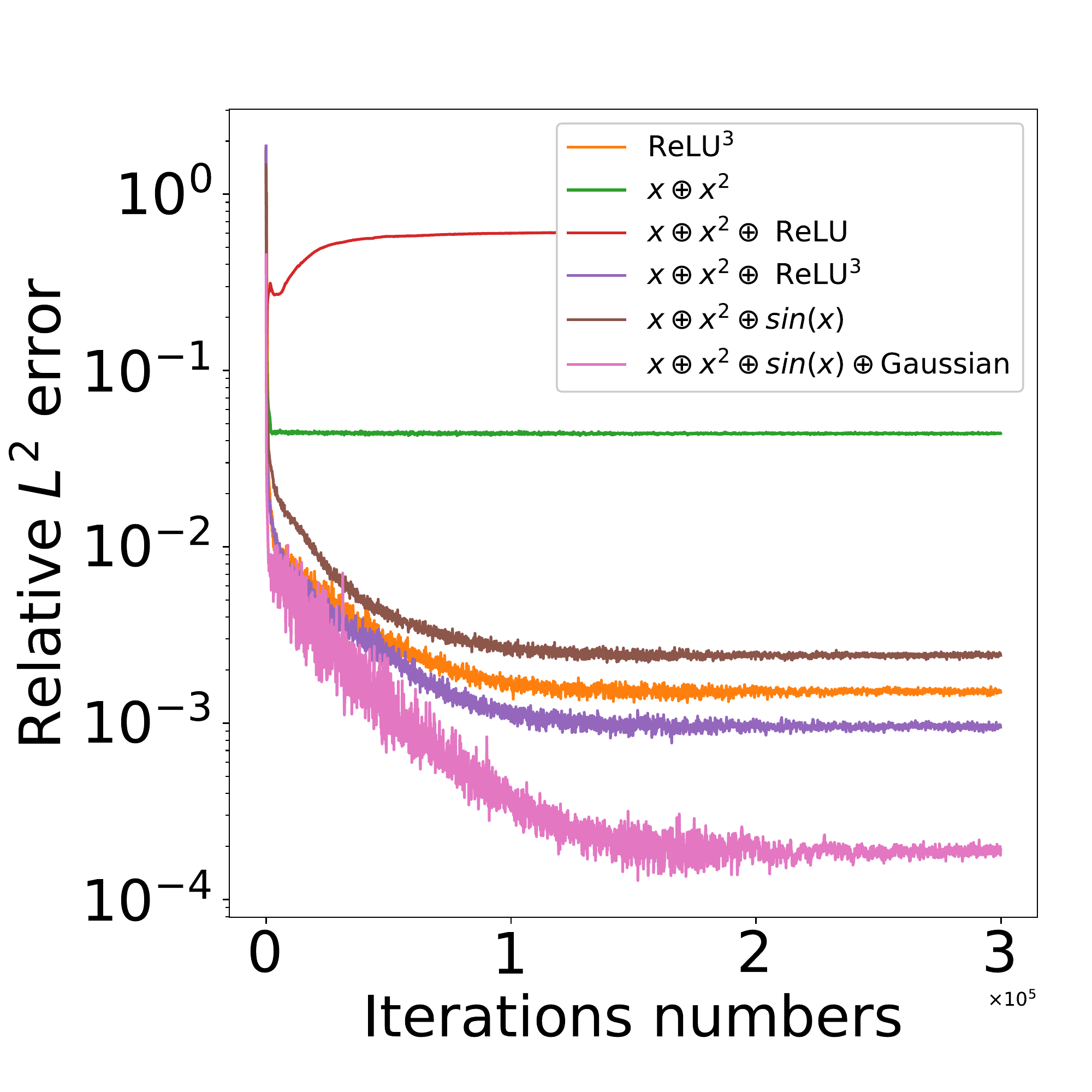}}
\subfigure[Oscillation]{\label{fig:oscillation}\includegraphics[width=.2\textwidth]{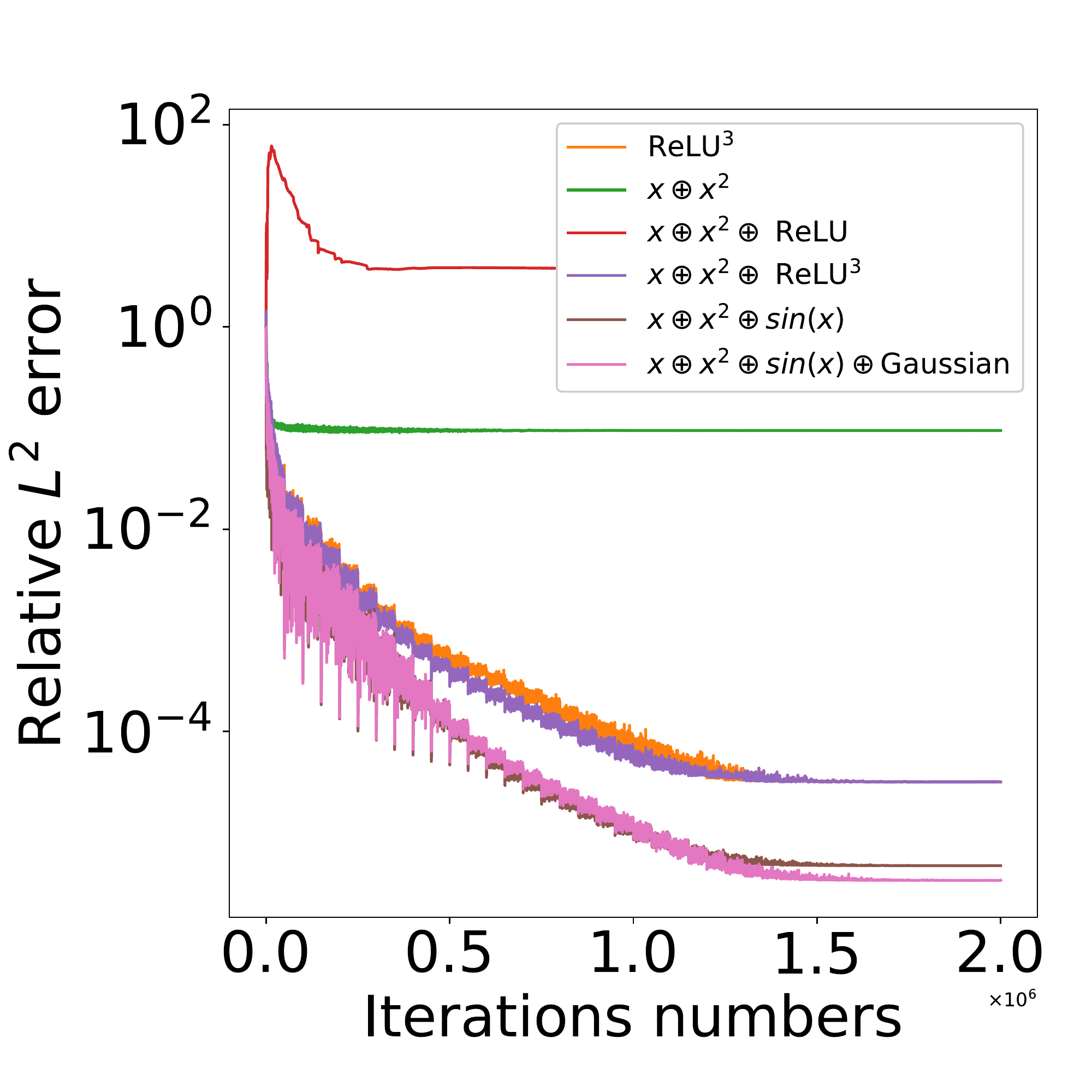}}
\subfigure[Oscillation]{\label{fig:oscillation2}\includegraphics[width=.2\textwidth]{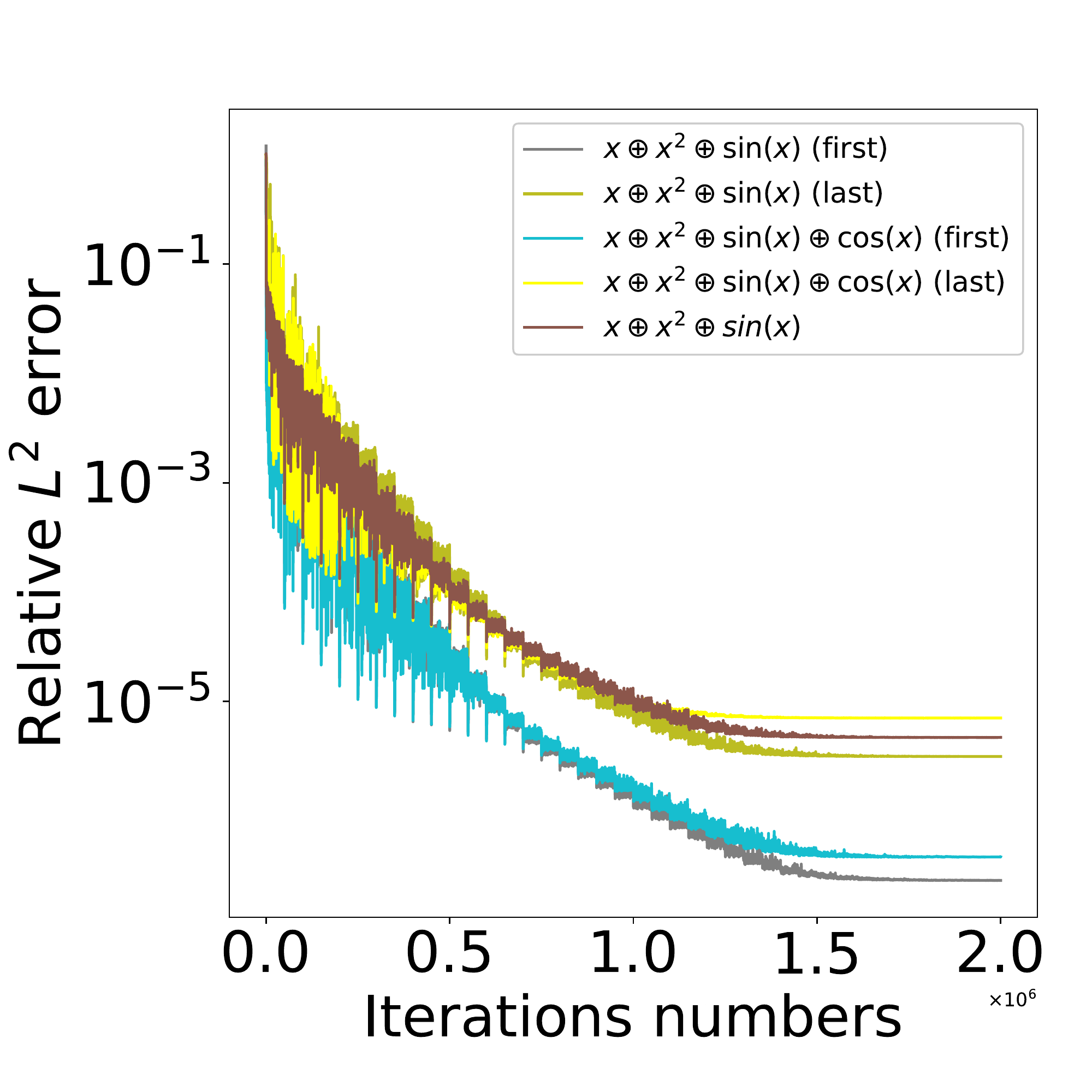}}
\subfigure[Oscillation]{\label{fig:oscillation20}\includegraphics[width=.2\textwidth]{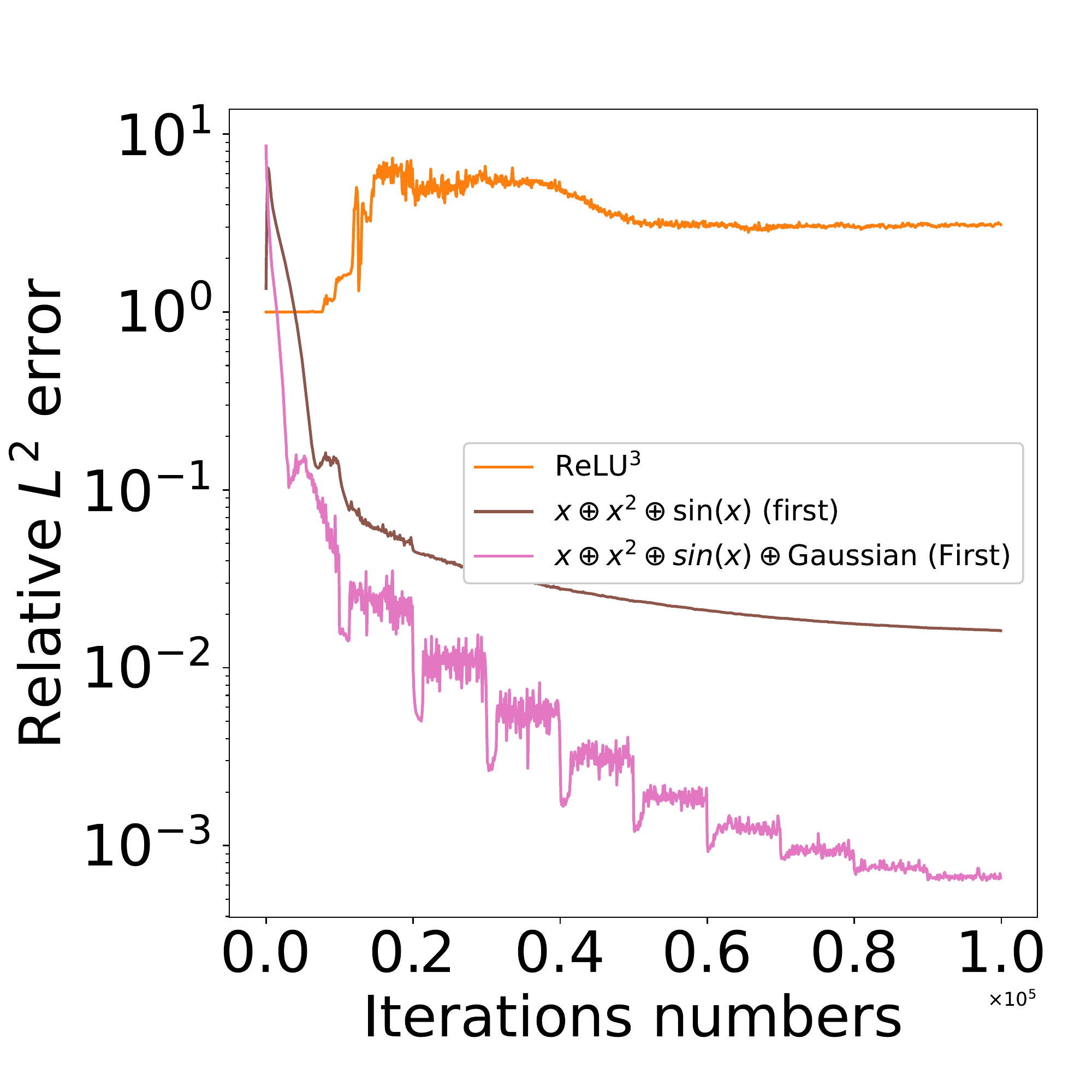}}
\subfigure[Eigenvalue (d=5)]{\label{fig:eigen5}\includegraphics[width=.2\textwidth]{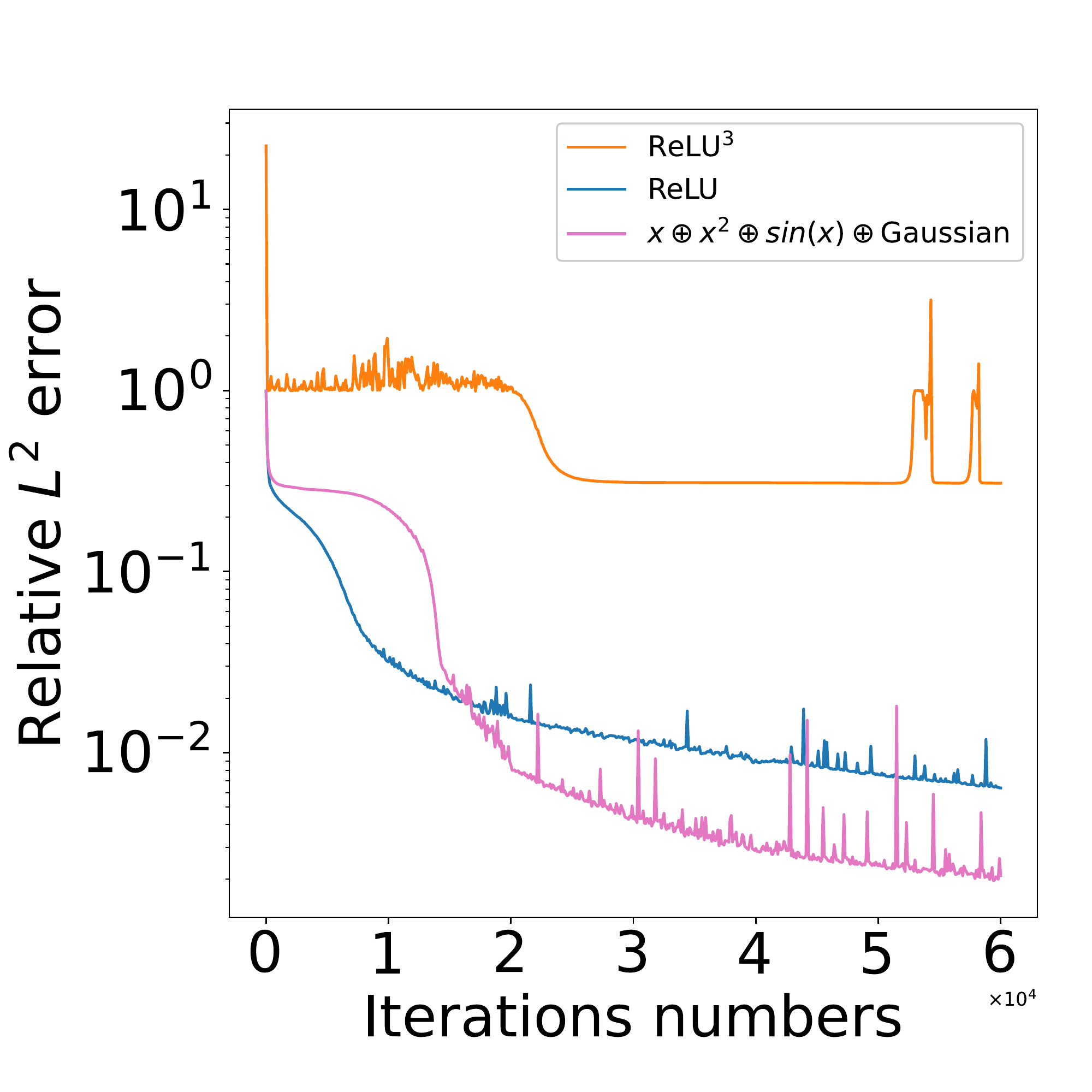}}
\subfigure[Eigenvalue (d=10)]{\label{fig:eigen10}\includegraphics[width=.2\textwidth]{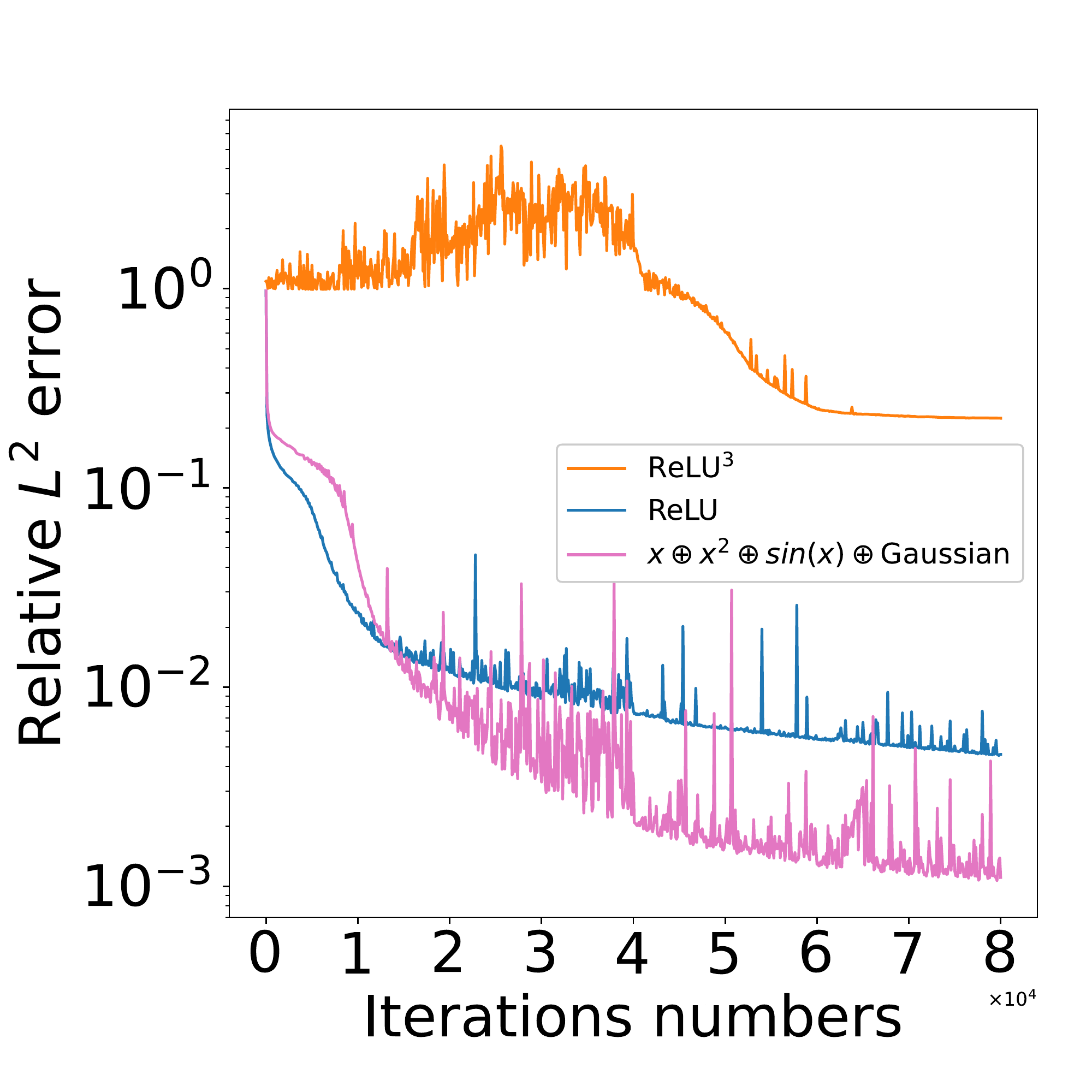}}
\vspace{-0.4cm}
\caption{The relative $L^2$ error vs. iteration of different activation functions for (a) the regression, (b) Poisson equation \eqref{eqn:regular}, (c) PDE with low regularity \eqref{eqn:singular}, (d) PDE with an oscillatory solution \eqref{eqn:oscillation}, PDE with (e) an oscillatory solution or (f) a super oscillatory solution solved by using different scaling parameters in trigonometric activation function, and (g-h) Eigenvalue problem with $d=5$ or $d=10$.}
\vspace{-0.4cm}
\end{figure*}

\subsubsection{Image Signal} We regress a grayscale image by learning a mapping from two-dimensional pixel coordinates to the corresponding pixel value. Four image of size 256$\times$256 are used, including Camera, Astronaut, Cat and Coin, which are available in Python Pillow. Note that images usually contain a cartoon part and a texture part. We apply three types of RAFs for image fitting: Sine, $\alpha_1  \sin(\beta_1x)$; Poly-Sine, $\alpha_1 \sin(\beta_1 x)+ \alpha_3 x + \alpha_4 x^2
    \label{eqn:imagefitting_sine_poly}$;
and Poly-Sine-Gaussian,
\begin{align}
    \alpha_1 \sin(\beta_1 x)+ \alpha_2\text{exp}(-x^2/(2\beta_2^2))+\alpha_3 x + \alpha_4 x^2.
    \label{eqn:imagefitting_sine_poly_gaussian}
\end{align}

Here, $\alpha_1$, $\alpha_2$, $\alpha_3$, $\alpha_4$, $\beta_1$, and $\beta_2$ are initialized as $\mathcal{N}(2,0.1)$, $\mathcal{N}(1,0.1)$, $\mathcal{N}(0.0,0.1)$, $\mathcal{N}(1.0,0.1)$, $\mathcal{N}(30,0.001)$, and $\mathcal{U}(0.01,0.05)$, respectively. An NN with $3$ hidden layers and $256$ neurons per layer is trained for $2,000$ iterations. Table~\ref{tab:image} summarizes the Peak signal-to-noise ratio (PSNR) and Structural similarity~(SSIM) of the fitted images showing that RAFs outperform SIREN with a significant margin. 


 

\subsubsection{Video Signal}
We fit a color video named Bike with $250$ frames available in Python Skvideo Package. The regression is from three-dimensional coordinates to RGB pixel values. 
We apply Sine-Guassian as defined in Section \ref{sec:audio}, but $\alpha_1$, $\alpha_2$, $\beta_1$ and $\beta_2$ are initialized by $\mathcal{N}(1,0.1)$, $\mathcal{N}(1,0.1)$, $\mathcal{N}(30,0.001)$ and $\mathcal{U}(0.002,0.01)$, respectively. An NN with $3$ hidden layers and $400$ neurons per layer is trained for $100,000$ iterations. Figure~\ref{fig:bikevideo} displays 
the training curves of video fitting for different activation function. Table~\ref{tab:video} shows the mean and standard derivation of PSNR for video over $250$ frames.
From Figure~\ref{fig:bikevideo}, 
RAFs can lead to a better minimizer with a larger PSNR than SIREN.

\subsection{Scientific Computing Applications}

We compare RAFs with popular activation functions in scientific computing and provide ablation study to justify the combination of $\mathcal{A}=\{x,x^2,\sin(x),\text{exp}(-x^2)\}$. The relative $L^2$  error is defined by
$
    \left( \frac{\sum_{i=1}^N \left(u(x_i)-\hat{u}(x_i)\right)^2}{\sum_{i=1}^N u^2(x_i)}\right)^{\frac{1}{2}},
$
where $\{x_i\}_{i=1}^N$ are random points uniformly sampled in the domain, $u$ is the true solution, and $\hat{u}$ is the estimated solution. We will adopt two metrics to quantify the performance of activation functions. The first one is the relative $L^2$ error on test samples. The second metric is the condition number of the NTK matrices for PDE solvers. A smaller condition number usually leads to a smaller iteration number to achieve the same accuracy.

\textbf{Network setting.} 
In all examples, we employ ResNet with two residual blocks and each block contains two hidden layers. Unless specified particularly, the width is set as $50$ and all weights and biases in the $\ell$-th layer are initialized by $\mathcal{U}(-\sqrt{1/N_{\ell-1}},\sqrt{1/N_{\ell-1}})$, where $N_{\ell-1}$ is the width of the $\ell-1$-th layer. Note that the network with RAFs can be expressed by a network with a single activation function in each neuron but different neurons can use different activation functions. For example, in the case of poly-sine-Gaussian networks, we will use $1/4$ neurons within each layer with $x$ activation function, $1/4$ with $x^2$, $1/4$ with $\sin(x)$, and $1/4$ with $\text{exp}(-x^2)$. 
In this new setting, it is not necessary to train extra combination coefficients in the RAF. Though training the scaling parameters in the RAF might be beneficial in general applications, we focus on justifying the poly-sine-Gaussian activation function without emphasizing the scaling parameters. Hence, in almost all tests, the scaling parameters are set to be 1 for $x$, $x^2$, and $\sin(x)$, and the scaling parameter is set to be $0.1$ for $\text{exp}(-x^2)$. In the case of oscillatory 
solution~\eqref{eqn:oscillation}, we specify the scaling parameter of $\sin(x)$ to introduce oscillation in the NTK. The idea of scaling parameters was also tested and verified in \cite{JAGTAP2020109136,Ameya}. The other implementation detail can be found in Appendix. 
\subsubsection{Discontinuous Function Regression}
We first show the advantage of the poly-sine-Gaussian activation function by regression a discontinuous function, $f(x) = -2x+1, $ when $x\geq 0$, $f(x) = -2x-1, $ when $x<0$, 
on the domain $\Omega = [-1,1]$. The relative $L^2$ error is presented in Table \ref{tab:pdes} and the training process is visualized in Figure \ref{fig:regressions}. The regression result shows that the poly-sine-Gaussian activation has the best performance. We would like to remark that rational activation \cite{boulle2020rational} works well for regression problems but fails in our PDE problems without meaningful solutions. Hence, we only compare RAFs with rational activation functions in this example. The numerical results justify the combination of four kinds of activation functions. Remark that the computational time of rational activation functions is twice of the time of RAFs, though their accuracy is almost the same.

\begin{table*}
  \begin{minipage}[t]{0.45\textwidth}
  \captionof{table}{The best historical accuracy for the  equation in \eqref{eqn:oscillation} with an oscillatory solution when the scaling parameters of $\sin(x)$ activation functions either in the first hidden layer or the last hidden layer are pre-fixed.}
  \scalebox{0.9}{
    \begin{tabular}{ccc}
    \toprule
    Activation & Position & L2 error \\
    \midrule
      $x \oplus x^2 \oplus \sin(x)$ & first & 2.31 e-07 \\
      $x \oplus x^2 \oplus \sin(x)$ & last & 3,14 e-06  \\
      $x \oplus x^2 \oplus \sin(x) \oplus \cos(x) $ & first & 3.79 e-07 \\
	  $x \oplus x^2 \oplus \sin(x) \oplus \cos(x) $ & last & 1.90 e-03\\
    \bottomrule
    \end{tabular}%
}
\label{tbl:oscillation2}
\end{minipage}
\hspace{0.4cm}
  \begin{minipage}[t]{0.5\textwidth}
  \centering
  \captionof{table}{The condition number of the NTK in \eqref{eqn:NTKPDE} of PDE solvers with different activation functions at initialization. The NTK matrix is evaluated with $100$ samples, i.e., the matrix size is $100\times 100$.}
  \scalebox{0.85}{
\begin{tabular}{cccc}
    \toprule
    Activation & Eqn. \eqref{eqn:regular} & Eqn. \eqref{eqn:singular}  & Eqn. \eqref{eqn:oscillation} \\
    \midrule
    $\text{ReLU} ^3$  &  1.32 e+11  &  2.60 e+10  &   3.23 e+11 \\
    	$x \oplus x^2$   &  4.71 e+11  &  1.74 e+11  &   4.28 e+11 \\
    	$x \oplus x^2 \oplus \text{ReLU}$  &  1.01 e+11  &  1.09 e+10  &  3.11 e+10 \\
    	$x \oplus x^2 \oplus \text{ReLU}^3$    &  2.03 e+12  &  1.65 e+11  &   3.45 e+11 \\
    	$x \oplus x^2 \oplus \sin(x)$   &  1.92 e+12  &  5.18 e+10  &  1.10 e+11 \\
    	$x \oplus x^2 \oplus \sin(x) \oplus$ Gaussian   &   3.91 e+08  & 4.11 e+09  &   1.36 e+10 \\
    \bottomrule
\end{tabular}%
}
\label{tbl:NTK}
    \end{minipage}
\end{table*}

\subsubsection{Poisson Equation with a Smooth Solution}\label{equ:Construction} 

Now we solve a two-dimensional Poisson equation 
\begin{equation}
\label{eqn:regular}
	\begin{aligned}
		-\Delta u = f \text{ for }  \Bx \in \Omega \text{ and } u =0  \text{ for } \Bx \in \partial \Omega
	\end{aligned}
\end{equation}
with a smooth solution $u(\Bx) = x_1^2(1-x_1) x_2^2(1-x_2)$ defined on $\Omega=[0,1]^2$.  The numerical solution can be constructed as 
$
        \hat{u}(\Bx;\bm{\theta}) = \left(\Pi_{i=1}^2 x_i(1-x_i)\right) \phi(\Bx;\bm{\theta}),
$
where $\phi(\Bx;\bm{\theta})$ is an NN. We apply the loss function~\eqref{eqn:dloss2} to identify an estimated solution. The relative $L^2$ errors for different activation functions are shown in Table \ref{tab:pdes} and the corresponding training process is visualized in Figure \ref{fig:poisson}. The RAF with $\mathcal{A}=\{x,x^2,\sin(x),\text{exp}(-x^2)\}$ achieves the best performance. The networks with other activation functions reach local minimal and cannot escape from these minimal after $20k$ iterations, while the poly-sine-Gaussian network continuously reduces the error even after $50k$ iterations. The numerical results also justify the combination of four kinds of activation functions. 



\subsubsection{PDE with Low Regularity}
Next, we consider a two-dimensional PDE
\begin{equation}
\label{eqn:singular}
-\nabla\cdot (|\Bx| \nabla u) = f \text{ for }   \Bx \in \Omega \text{ and }  u =0  \text{ for }  \Bx \in \partial \Omega
\end{equation}
with a solution $u(\Bx) = \sin(2\pi (1- |\Bx| ))$ defined on $\Omega=\{\Bx : |\Bx| \leq 1\}$. The exact solution has low regularity at the origin. Let $\hat{u}(\Bx;\bm{\theta}) = (1-|\Bx|)\phi(\Bx;\bm{\theta})$, where $\phi(\Bx;\bm{\theta})$ is an NN and $\hat{u}(\Bx;\bm{\theta})$ satisfies the boundary condition automatically. The loss function~\eqref{eqn:dloss2} is used to identify an estimated solution to the equation \eqref{eqn:singular}. The relative $L^2$ errors for different activation functions are shown in Table \ref{tab:pdes} and the training curves is visualized in Figure \ref{fig:lowregularity}. Since the true solution has low regularity, it is more challenging than the example~\eqref{eqn:regular} to obtain good accuracy. The RAF with $\mathcal{A}=\{x,x^2,\sin(x),\text{exp}(-x^2)\}$ achieves lowest test error, which justifies the combination of four activation functions. 



\subsubsection{PDE with an Oscillatory Solution}
Next, to verify the performance of $\sin(x)$ in the RAF, we consider a two-dimensional Poisson equation as follows, 
\begin{equation}
\label{eqn:oscillation}
-\Delta u+ (u+2)^2 = f \text{ for }  \Bx \in \Omega 
\end{equation}
with a Dirichlet boundary condition and an oscillatory solution $u(\Bx) = \sin(6\pi x_1) \sin(6\pi x_2) $ defined on $\Omega=[0,1]^2$. The NN is constructed as in Section \ref{equ:Construction} with width $100$. The loss function~\eqref{eqn:dloss2} is used to identify the NN solution. The test error is shown in Table \ref{tab:pdes} and Figure \ref{fig:oscillation}. RAFs with $\{x,x^2,\sin(x),\text{exp}(-x^2)\}$ achieve the best performance. 

As discussed in Section \ref{sec:RAF}, introducing oscillation in NNs is crucial to lessen the spectral bias of NNs. Fixing different scaling parameters in $\sin(x)$ can help to lessen the spectral bias better and obtain high-resolution image 
reconstruction \cite{tancik2020fourier}. Therefore, in the case of oscillatory target functions, we also specify scaling parameters in $\sin(x)$ to verify the performance. If $n$ $\sin(x)$ functions are used in a layer, we will use $\{\sin(2\pi x),\sin(4\pi x),\dots,\sin(2n\pi x)\}$. Besides, it is also of interest to see the performance of $\cos(x)$. Since specifying a wide range of scaling parameters in every hidden layer will create too much oscillation, we only specify scaling parameters either in the first or the last hidden layer. Therefore, four tests were conducted and the results are shown in Table \ref{tbl:oscillation2} and Figure \ref{fig:oscillation2}. The results show that $\cos(x)$ does not have an effective gain, but specifying different scaling parameters improves the performance especially in the first hidden layer. Further, we test a super oscillatory solution $u(x) = \sin(40x_1)\sin(40x_2)$ to the equation \eqref{eqn:oscillation} and the result in Figure~\ref{fig:oscillation20} shows our methods outperform ReLU$^3$.

\subsubsection{Nonlinear Schr\"odinger Equation}
Last, we consider a $d$-dimensional nonlinear Schr\"odinger operator defined as $\mathcal{L}\varphi = - \Delta \varphi + \varphi^3 + V \varphi$ on $\Omega$, where $V(x) = -\frac{1}{c^2}\exp(\frac{2}{d}\sum_{i=1}^d\cos x_i)+ \sum_{i=1}^d(\frac{\sin^2 x_i}{d^2}- \frac{\cos x_i}{d})-3$ and $\Omega=[0,2\pi]^d$. $\lambda = -3$ and $\varphi(x) = \exp( \frac{1}{d}\sum_{j=1}^d\cos(xj ))/c$ is the leading eigenpair of the operator $\mathcal{L}$. Here $c$ is a positive
constant such that $\int_{\Omega}\varphi^2(x)dx=|\Omega|$. We follow the approach in \cite{HAN2020} to solve for the leading eigenpair. The NN in \cite{HAN2020} consists of two parts: 1) the first hidden layer uses $\sin(x)$ and $\cos(x)$ with different frequencies so that the whole network satisfies periodic boundary conditions; 2) the other hidden layers uses ReLU activation functions. We compare three activation functions, ReLU, ReLU$^3$, poly-sine-Gaussian, after the first hidden layer. Table~\ref{tab:pdes} shows the error for different activation function and Figure \ref{fig:eigen5} and \ref{fig:eigen10} display the training curves for $d=5$ and $d=10$, respectively. One can see poly-sine-Gaussian reaches a smaller minimal than ReLU, ReLU$^3$.

\subsubsection{Neural Tangent Kernel of PDE Solvers}
As discussed in Section \ref{sec:related}, the condition number of NTK is also a crucial factor that determines the performance of deep learning. The condition numbers of NTK for the different PDE problems at initialization when different activation functions are used are summarized in Table \ref{tbl:NTK}. The condition number of the poly-sine-Gaussian activation function is smallest. Hence, from the perspective of NTK, we have also justified the combination of basic activation functions in the poly-sine-Gaussian activation function.

\section{Conclusion}\label{sec:cond}
 We propose RAF and its approximation theory. NNs with this activation function can reproduce traditional approximation tools (e.g., polynomials, Fourier basis functions, wavelets, radial basis functions) and approximate a certain class of functions with exponential and dimension-independent approximation rates. We have numerically demonstrated that RAFs can generate neural tangent kernels with a better condition number than traditional activation functions, lessening the spectral bias of deep learning. Extensive experiments on coordinate-based data representation and PDEs demonstrate the effectiveness of the proposed activation function. We have not explored the optimal choice of basic activation functions in this paper, which would be problem-dependent and is left for future work.

{\bf Acknowledgements.} C. W. was partially supported by National Science Foundation Award DMS-1849483. H. Y. was partially supported by the US National Science Foundation under award DMS-1945029. The authors thank Mo Zhou for sharing his code for Schr{\"o}dinger equations.

\bibliographystyle{icml2021}

\onecolumn
\appendix
{
\section{Preliminaries}
\subsection{Deep Neural Networks}\label{sec:dnn}
Mathematically, NNs are a form of highly non-linear function parametrization via function compositions using simple non-linear functions \cite{IanYoshuaAaron2016}.  The justification of this kind of approximation is given by the universal approximation theorems of NNs in \cite{kurkova1992,barron1993,yarotsky2017,yarotsky2018} with newly developed quantitative and explicit error characterization \cite{Shen2,Shen3,Shen4}, which shows that function compositions are more powerful than other traditional approximation tools. There are two popular neural network structures used in NN-based PDE solvers.

The first one is the fully connected feed-forward neural network (FNN), which is the composition of $L$ simple nonlinear functions as follows:
\begin{equation}\label{eqn:FNN}
	\phi(\bm{x};\bm{\theta}):=\bm{a}^T \bm{h}_L \circ \bm{h}_{L-1} \circ \cdots \circ \bm{h}_{1}(\bm{x}),
\end{equation}
 where $\bm{h}_{\ell}(\bm{x})=\sigma\left(\bm{W}_\ell \bm{x} + \bm{b}_\ell \right)$ with $\bm{W}_\ell \in \mathbb{R}^{N_{\ell}\times N_{\ell-1}}$, $\bm{b}_\ell \in \mathbb{R}^{N_\ell}$ for $\ell=1,\dots,L$, $\bm{a}\in \mathbb{R}^{N_L}$, $\sigma$ is a non-linear activation function, e.g., a rectified linear unit (ReLU) $\sigma(x)=\max\{x,0\}$ or hyperbolic tangent function $\tanh(x)$. Each $\bm{h}_\ell$ is referred as a hidden layer,  $N_\ell$ is the width of the $\ell$-th layer, and $L$ is called the depth of the FNN. In the above formulation, $\bm{\theta}:=\{\bm{a},\,\bm{W}_\ell,\,\bm{b}_\ell:1\leq \ell\leq L\}$ denotes the set of all parameters in $\phi$, which uniquely determines the underlying neural network.

Another popular network is the residual neural network (ResNet) introduced in \cite{HeZhangRenSun2016}. We present its variant defined recursively as follows:
 \begin{eqnarray}\label{eqn:ResNet}
 \bm{h}_0&=&\bm{V}\bm{x},\nonumber\\
 \bm{g}_\ell&=&\sigma(\bm{W}_\ell\bm{h}_{\ell-1}+\bm{b}_{\ell}),
\qquad\ell=1,2,\dots,L,\nonumber\\
 \bm{h}_\ell&=&\bm{\bar{U}}_\ell \bm{h}_{\ell-2}+\bm{U}_\ell\bm{g}_\ell, \quad\ell=1,2,\dots,L,\nonumber\\
\phi(\bm{x};
 \bm{\theta})&=&\bm{a}^T\bm{h}_L,
 \end{eqnarray}
 where $\bm{V}\in \mathbb{R}^{N_0\times d}$, $\bm{W}_{\ell}\in \mathbb R^{N_{\ell}\times N_0}$, 
$\tilde{\bm{U}}_{\ell}\in \mathbb R^{N_0\times N_0}$, $\bm{U}_{\ell}\in \mathbb R^{N_0\times N_{\ell}}$, $\bm{b}_{{\ell}}\in \mathbb R^{N_{\ell}}$ for ${\ell}=1, \cdots, L$, $\bm{a}\in \mathbb R^{N_0}$, $\bm{h}_{-1}=0$. Throughout this paper, we consider $N_0=N_{\ell}=N$ and $\bm{U}_{\ell}$ is set as the identity matrix in the numerical implementation of ResNets for the purpose of simplicity. Furthermore, as used in \cite{EYu2018}, we set $\tilde{\bm{U}}_{\ell}$ as the identity matrix when $\ell$ is even and  set $\tilde{\bm{U}}_{\ell}=0$ when $\ell$ is odd. 

\subsection{Deep Learning for Regression Problems} 
Regression problems aim at identifying an unknown target function $f:\mathbf{x}\in\Omega \rightarrow y{\in \mathbb{R}}$ from training samples $\{(\mathbf{x}_i,y_i)\}_{i=1}^N$, where $\mathbf{x}_i$'s are usually assumed to be i.i.d samples from an underlying distribution $\pi$ {defined on a domain $\Omega\subseteq\mathbb{R}^n$,} and $y_i=f(\mathbf{x}_i)$ (probably with an additive noise). Consider the square loss $\ell(\mathbf{x},y;\bm{\theta})=\left| \phi(\mathbf{x};\bm{\theta})-y\right|^2$ of a given NN $\phi(\mathbf{x};\bm{\theta})$ that is used to approximate $f(\mathbf{x})$,   the population risk (error) and empirical risk (error) functions are respectively
\begin{equation}\label{eqn:pop}
\mathcal{J}(\bm{\theta})=\frac{1}{2}\E_{\mathbf{x}\sim \pi}\left[ \left| \phi(\mathbf{x};\bm{\theta})-f(\mathbf{x})\right|^2 \right],\quad \hat{\mathcal{J}}(\bm{\theta})=\frac{1}{2N}\sum_{i=1}^N \left| \phi(\mathbf{x}_i;\bm{\theta})-y_i\right|^2,
\end{equation}
which are also functions that depend on the depth $L$ and width $N_\ell$ of $\phi$ implicitly. The optimal set $\hat{\bm{\theta}}$ is identified via
\begin{equation}\label{eqn:RGloss}
\hat{\bm{\theta}}=\argmin_{\bm{\theta}} \hat{ \mathcal{J}}(\bm{\theta}),
\end{equation}
and $\phi(\cdot;\hat{\bm{\theta}}):\Omega\to\mathbb{R}$ is the learned NN that {approximates} the unknown function $f$.

\subsection{Deep Learning for Solving PDEs}\label{sec:LSM}

Deep learning can be applied to solve various PDEs including the initial value problems and boundary value problems (BVP) based on different variational formulations \cite{doi:10.1002/cnm.1640100303,712178,EYu2018,liao2019deep}. In this paper, we will take the example of BVP and the least squares method (LSM) \cite{doi:10.1002/cnm.1640100303,712178} without loss of generality. The generalization to other problems and methods is similar. Consider the BVP
\begin{equation}\label{eqn:BVP}
\begin{split}
&\mathcal{D}u(\bm{x})=f(u(\bm{x}),\bm{x}),\text{~in~}\Omega,\\
&\mathcal{B}u(\bm{x})=g(\bm{x}),\text{~on~}\partial\Omega,
\end{split}
\end{equation}
where $\mathcal{D}: \Omega\rightarrow \Omega$ is a differential operator that can be nonlinear, $f(u(\bm{x}),\bm{x})$ can be a nonlinear function in $u$, $\Omega$ is a bounded domain in $\mathbb{R}^d$, and $\mathcal{B}u=g$ characterizes the boundary condition. Other types of problems like initial value problems can also be formulated as a BVP as discussed in \cite{Gu2020}. Then LSM seeks a solution $u(\bm{x};\bm{\theta})$ as a neural network with a parameter set $\bm{\theta}$ via the following optimization problem
\begin{equation}\label{eqn:loss}
\underset{\bm{\theta}}{\min}~\mathcal{L}(\bm{\theta}):=\|\mathcal{D}u(\bm{x};\bm{\theta})-f(u,\bm{x})\|_{L^2(\Omega)}^2+\lambda\|\mathcal{B}u(\bm{x};\bm{\theta})-g(\bm{x})\|_{L^2(\partial\Omega)}^2,
\end{equation}
where $\mathcal{L}$ is the loss function consisting of the $L^2$-norm of the PDE residual $\mathcal{D}u(\bm{x};\bm{\theta})-f(u,\bm{x})$ and the boundary residual $\mathcal{B}u(\bm{x};\bm{\theta})-g(\bm{x})$, and $\lambda>0$ is a regularization parameter. 

The goal of \eqref{eqn:loss} is to find an appropriate set of parameters $\bm{\theta}$ such that the NN $u(\bm{x};\bm{\theta})$ minimizes the loss $\mathcal{L}(\bm{\theta})$. If the loss $\mathcal{L}(\bm{\theta})$ is minimized to zero with some $\bm{\theta}$, then $u(\bm{x};\bm{\theta})$ satisfies $\mathcal{D}u(\bm{x};\bm{\theta})-f(\bm{x})=0$ in $\Omega$ and $\mathcal{B}u(\bm{x};\bm{\theta})-g(\bm{x})=0$ on $\partial\Omega$, implying that $u(\bm{x};\bm{\theta})$ is exactly a solution of \eqref{eqn:BVP}. If $\mathcal{L}$ is minimized to a nonzero but small positive number, $u(\bm{x};\bm{\theta})$ is close to the true solution as long as \eqref{eqn:BVP} is well-posed (e.g. the elliptic PDE with Neumann boundary condition, see Thm.~4.1 in \cite{Gu2020}).

In the implementation of LSM, the minimization problem in \eqref{eqn:loss} is solved by SGD or its variants (e.g. Adagrad \cite{Duchi2011}, Adam \cite{Kingma2014} and AMSGrad \cite{Reddi2019}). In each iteration of the SGD, a stochastic loss function defined below is minimized instead of the original loss function in \eqref{eqn:loss}:
\begin{equation}\label{eqn:dloss}
    \begin{aligned}
    \min_{\bm{\theta}}\hat{\mathcal{L}} (\bm{\theta}):= \frac{1}{N}\sum_{i=1}^N   \big(\mathcal{D}u(\bm{x}_i; \bm{\theta}) - f(\bm{x}_i)\big)^2   + \frac{1}{M} \lambda\sum_{j=1}^{M} \big(\mathcal{B}u(\bm{x}_j; \bm{\theta})-g(\bm{x}_j)\big)^2,
    \end{aligned}
\end{equation}
where $\{\bm{x}_i\}_{i=1}^N$ are $N$ uniformly sampled random points in $\Omega$ and $\{\bm{x}_j\}_{j=1}^{M}$ are $M$ uniformly sampled random points on $\partial \Omega$. These random samples will be renewed in each iteration. Throughout this paper, we will use Adam, which is a variant of SGD based on momentum, to solve the NN-based optimization.

To facilitate the optimization convergence to the desired PDE solution,
special network structures can be proposed such that the NN can satisfy common boundary conditions, which can simplify the loss function in \eqref{eqn:loss} to
\begin{equation}\label{eqn:loss2}
\underset{\bm{\theta}}{\min}~\mathcal{L}(\bm{\theta}):=\|\mathcal{D}u(\bm{x};\bm{\theta})-f(u,\bm{x})\|_{L^2(\Omega)}^2,
\end{equation}
since $\mathcal{B}u(\bm{x}; \bm{\theta})=g(\bm{x})$ is satisfied by construction. Correspondingly, the stochastic loss function is reduced to 
\begin{equation}\label{eqn:dloss2}
    \begin{aligned}
    \min_{\bm{\theta}}\hat{\mathcal{L}} (\bm{\theta}):= \frac{1}{N}\sum_{i=1}^N   \big(\mathcal{D}u(\bm{x}_i; \bm{\theta}) - f(\bm{x}_i)\big)^2 .
    \end{aligned}
\end{equation}
In numerical implementation, the LSM loss function in \eqref{eqn:loss2} is more attractive because \eqref{eqn:loss} heavily relies on the selection of a suitable weight parameter $\lambda$ and a suitable initial guess. If $\lambda$ is not appropriate, it may be difficult to identify a reasonably good minimizer of \eqref{eqn:loss}, as shown by extensive numerical experiments in \cite{712178,gu2020structure,Lyu2020EnforcingEB}. However, we would like to remark that it is difficult to build NNs that automatically satisfy complicated boundary conditions especially when the domain $\Omega$ is irregular. 

The design of these special NNs depends on the type of boundary conditions. We will discuss the case of Dirichlet boundary conditions by taking one-dimensional problems defined in the domain $\Omega=[a, b]$ as an example. Network structures for more complicated boundary conditions in high-dimensional domains can be constructed similarly. The reader is referred to \cite{gu2020structure,Lyu2020EnforcingEB} for other kinds of boundary conditions. 

Suppose $\hat{u}(x;\bm{\theta})$ is a generic NN with trainable parameters $\bm{\theta}$. We will augment $\hat{u}(x;\bm{\theta})$ with several specially designed functions to obtain a final network ${u}(x;\bm{\theta})$ that satisfies $\mathcal{B}u({x};\bm{\theta})=g({x})$ automatically. For simplicity, let us consider the boundary conditions $u(a)=a_0$ and $u(b)=b_0$. In this case, we can introduce two special functions $h(x)$ and $l(x)$ to augment $\hat{u}(x;\bm{\theta})$ to obtain the final network $u(x;\bm{\theta})$:
\begin{equation}\label{11}
u(x;\bm{\theta}) = h(x)\hat{u}(x;\bm{\theta})+l(x).
\end{equation}
Then $u(x;\bm{\theta})$ is used to approximate the true solution of the PDE and is trained through \eqref{eqn:loss2}.

 A straightforward choice for $l(x)$ is 
\begin{equation*}
l(x)=(b_0-a_0)(x-a)/(b-a)+a_0,
\end{equation*}
and $h(x)$ can be set as
\begin{equation*}
h(x) = (x-a)^{p_a}(x-b)^{p_b},
\end{equation*}
with $0<p_a,~p_b\leq1$. To obtain an accurate approximation, $p_a$ and $p_b$ should be chosen to be consistent with the orders of $a$ and $b$ of the true solution, hence no singularity will be brought into the network structure. 

\subsection{The Training Behavior of Deep Learning}
\label{sec:FP}

The least-squares optimization problems in \eqref{eqn:loss} and \eqref{eqn:loss2} are highly non-convex and hence they are challenging to solve. For regression problems or solving linear PDEs, under the assumption of over-parameterized NNs (i.e., the width of NNs is sufficiently large) and appropriate random initialization of NN parameters, it was shown that the least-squares optimization admits global convergence by gradient descent with a linear convergence rate \cite{DBLP:journals/corr/abs-1806-07572,Du2018,Zhu2019,Chen1,LuoYang2020}. Though the over-parametrization assumption might not be realistic, it is still a positive sign for the justification of NNs in these least-squares problems. However, the convergence rate depends on the spectrum of the target function. The training of a randomly initialized NN has a stronger preference for reducing the fitting error of low-frequency components of a target solution. The high-frequency component of the target function would not be well captured until the low-frequency error has been eliminated. This phenomenon is called the F-principle in \cite{FP} and the spectral bias of deep learning in \cite{cao2019towards}. Related works on the learning behavior of NNs in the frequency domain is further investigated in \cite{FP,Luo2019}. In the case of nonlinear PDEs, these theoretical works imply that NN-based solvers would also have a bias towards reducing low-frequency errors \cite{wang2020pinns}. Without the assumption of over-parametrization, to the best of our knowledge, there is no theoretical guarantee that NN-based PDE solvers can identify the global minimizer via a standard SGD. Through the analysis of
the optimization energy landscape of SGD without the over-parameterization, it was shown that SGD with small batches tends to converge to the flattest minimum \cite{neyshabur2017geometry,lei2018implicit,dai2018towards}. However, such local minimizers might not give the desired PDE solutions. Hence, designing new training techniques to make SGD capable of identifying better minimizers has been an active research field.

\subsection{Neural Tangent Kernel}\label{sec:NTK}

Neural tangent kernel (NTK) originally introduced in \cite{DBLP:journals/corr/abs-1806-07572} and further investigated in \cite{arora2019finegrained,Lee_2020,cao2019towards,LuoYang2020,wang2020pinns} is one of the popular tools to study the training behavior of deep learning in regression problems and PDE problems. Let us briefly introduce the main idea of NTK following the linearized model for regression problems in \cite{Lee_2020} for simplicity. This introduction is sufficient for us to discuss the advantage of RAFs later in the next section. 

Let us use $\mathcal{X}$ to denote the set of training sample locations $\{\bm{x}_i\}_{i=1}^N$ in the empirical loss function $\hat{\mathcal{J}}(\bm{\theta})$ in \eqref{eqn:pop}. Let $\mathcal{Y}$ be the set of function values at these sample locations. Using gradient flow to analyze the training dynamics of $\hat{\mathcal{J}}(\bm{\theta})$, we have the following evolution equations:
\begin{equation}\label{eqn:gdf}
\dot{\bm{\theta}}_t = -\nabla_{\bm{\theta}}\phi_t(\mathcal{X})^T\nabla_{\phi_t(\mathcal{X})}\hat{\mathcal{J}},
\end{equation}
and
\begin{equation}\label{eqn:gdff}
\dot{\phi}_t(\mathcal{X})=\nabla_{\bm{\theta}}\phi_t(\mathcal{X})\dot{\bm{\theta}}_t=-\hat{\Theta}_t(\mathcal{X},\mathcal{X})\nabla_{\phi_t(\mathcal{X})}\hat{\mathcal{J}},
\end{equation}
where $\bm{\theta}_t$ is the parameter set at iteration time $t$, $\phi_t(\mathcal{X})=\text{vec}([\phi_t(\bm{x};\bm{\theta}_t)]_{\bm{x}\in\mathcal{X}})$ is the $N\times 1$ vector of concatenated function values for all samples, and $\nabla_{\phi_t(\mathcal{X})}\hat{\mathcal{J}}$ is the gradient of the loss with respect to the network output vector $\phi_t(\mathcal{X})$, $\hat{\Theta}_t:=\hat{\Theta}_t(\mathcal{X},\mathcal{X})$ in $\mathbb{R}^{N\times N}$ is the NTK at iteration time $t$ defined by
\[
\hat{\Theta}_t=\nabla_{\bm{\theta}}\phi_t(\mathcal{X})\nabla_{\bm{\theta}}\phi_t(\mathcal{X})^T.
\]
The NTK can also be defined for general arguments, e.g., $\hat{\Theta}_t(\bm{x},\mathcal{X})$ with $\bm{x}$ as a test sample location. 

After initialization, the training dynamics of deep learning can be characterized by \eqref{eqn:gdf} and \eqref{eqn:gdff}. The steady-state solutions of these evolution equations give the learned network parameters and the learned neural network in the regression problem. However, these evolution equations are highly nonlinear and it is difficult to obtain the explicit formulations of their solutions. Fortunately, as discussed in the literature \cite{DBLP:journals/corr/abs-1806-07572,arora2019finegrained,Lee_2020,cao2019towards,LuoYang2020}, when the network width goes to infinity, these evolution equations can be approximately characterized by their linearization, the solution of which admit simple explicit formulas.

For simplicity, we consider the linearization in \cite{Lee_2020} to obtain explicit solutions to discuss the training dynamics of deep learning. In particular, the following linearized network by Taylor expansion is considered,
\begin{equation}\label{eqn:lin}
\Li(\bm{x}):=\phi(\bm{x};\bm{\theta}_0) + \nabla_{\bm{\theta}} \phi(\bm{x};\bm{\theta}_0) \bm{\omega}_t,
\end{equation}
where $\bm{\omega}_t:=\bm{\theta}_t-\bm{\theta}_0$ is the change in the parameters from their initial values. The dynamics of gradient flow using this linearized function are governed by
\begin{equation}\label{eqn:gdfl}
\dot{\bm{\omega}}_t = -\nabla_{\bm{\theta}}\phi_0(\mathcal{X})^T\nabla_{\Li(\mathcal{X})}\hat{\mathcal{J}},
\end{equation}
and
\begin{equation}\label{eqn:gdffl}
\dot{\phi}^{\text{lin}}_t(\bm{x})=-\hat{\Theta}_0(\bm{x},\mathcal{X})\nabla_{\Li(\mathcal{X})}\hat{\mathcal{J}}.
\end{equation}
The above evolution equations have closed form solutions
\[
\bm{\omega}_t = -\nabla_{\bm{\theta}}\phi_0(\mathcal{X})^T \hat{\Theta}^{-1}_0 \left(I-e^{-\hat{\Theta}_0 t}\right)(\phi_0(\mathcal{X})-\mathcal{Y}),
\]
and
\begin{equation}\label{eqn:evs0}
\Li(\mathcal{X}) = \left(I-e^{-\hat{\Theta}_0 t}\right)\mathcal{Y}+ e^{-\hat{\Theta}_0 t} \phi_0(\mathcal{X}).
\end{equation}
For an arbitrary point $\bm{x}$,
\begin{equation}\label{eqn:evs}
\Li(\bm{x}) =\phi_0(\bm{x})- \hat{\Theta}_0(\bm{x},\mathcal{X})\hat{\Theta}_0^{-1}\left(I-e^{-\hat{\Theta}_0 t}\right)(\phi_0(\mathcal{X})-\mathcal{Y}),
\end{equation}
which is equivalent to
\begin{equation}\label{eqn:evs2}
\Li(\bm{x}) - \phi_0(\bm{x})= \hat{\Theta}_0(\bm{x},\mathcal{X})\hat{\Theta}_0^{-1}\left(I-e^{-\hat{\Theta}_0 t}\right)(\mathcal{Y}-\phi_0(\mathcal{X})).
\end{equation}

Therefore, once the initialized network $\phi_0(\bm{x})$ and the NTK at initialization $\hat{\Theta}_0$ are computed, we can obtain the time evolution of the linearized neural network without running gradient descent. The solution in \eqref{eqn:evs} serves as an approximate solution to the nonlinear evolution equation in \eqref{eqn:gdff}. Based on \eqref{eqn:evs2}, we see that deep learning can be approximated by a kernel method with the NTK $\hat{\Theta}_0$ that updates the initial prediction $\phi_0(\bm{x})$ to a correct one.

There mainly two kinds of observations from \eqref{eqn:evs} from the perspective of kernel methods. The first one is through the eigendecomposition of the initial NTK. If the initial NTK is positive definite, $\Li$ will eventually converge to a neural network that fits all training examples and its generalization capacity is similar to kernel regression by \eqref{eqn:evs}. The error of $\Li$ along the direction of eigenvectors of $\hat{\Theta}_0$ corresponding to large eigenvalues decays much faster than the error along the direction of eigenvectors of small eigenvalues, which is referred to as the spectral bias of deep learning. The second one is through the condition number of the initial NTK. Since NTK is real symmetric, its condition number is equal to its largest eigenvalue over its smallest eigenvalue. If the initial NTK is positive definite, in the ideal case when $t$ goes to infinity, $\left(I-e^{-\hat{\Theta}_0 t}\right)(\phi_0(\mathcal{X})-\mathcal{Y})$ in \eqref{eqn:evs} approaches to $\phi_0(\mathcal{X})-\mathcal{Y}$ and, hence, $\Li(\bm{x})$ goes to the desired function value for $\bm{x}\in\mathcal{X}$. However, in practice, when $\hat{\Theta}_0$ is very ill-conditioned, a small approximation error in $\left(I-e^{-\hat{\Theta}_0 t}\right)(\phi_0(\mathcal{X})-\mathcal{Y})\approx \phi_0(\mathcal{X})-\mathcal{Y}$ may be amplified significantly, resulting in a poor accuracy for $\Li(\bm{x})$ to solve the regression problem. We will discuss the advantage of the proposed RAFs in terms of these two observations later in the next two sections. 

The above discussion is for the NTK in regression setting. In the case of PDE solvers, we introduce the NTK below
\begin{equation}\label{eqn:NTKPDE}
\hat{\Theta}_t=\left(\nabla_{\bm{\theta}}\mathcal{D}\phi_t(\mathcal{X})\right)\left(\nabla_{\bm{\theta}}\mathcal{D}\phi_t(\mathcal{X})\right)^T,
\end{equation}
where $\mathcal{D}$ is the differential operator of the PDE. Similar to the discussion for regression problems, the spectral bias and the conditioning issue also exist in deep learning based PDE solvers by almost the same arguments.

\section{Proof of Theories}
\subsection{Proof of Theorem~\ref{thm:NNex2}}

The proof of Theorem~\ref{thm:NNex2} relies on the following lemma.

\begin{lemma}\label{lem:NNex2}
\begin{enumerate}[label=(\roman*)]
\item An identity map in $\R^d$ can be realized exactly by a poly-sine-Gaussian network with one hidden layer and $d$ neurons.
\item $f(x)=x^2$ can be realized exactly by a poly-sine-Gaussian network with one hidden layer and one neuron.
\item $f(x,y)=xy=\frac{(x+y)^2-(x-y)^2}{4}$ can be realized exactly by a poly-sine-Gaussian network with one hidden layer and two neurons.
\item Assume $P(\xx)=\xx^\xal=x_1^{\alpha_1}x_2^{\alpha_2}\cdots x_d^{\alpha_d}$ for $\xal\in \N^d$. For any $N,L\in \N^+$ such that $NL+2^{\lfloor \log_2 N \rfloor}\geq |\xal|$, there exists a poly-sine-Gaussian network $\phi$ with width $2N+d$ and depth $L+\lceil \log_2 N\rceil$ such that
    \begin{equation*}
    \phi(\xx)=P(\xx)\quad \text{for any $\xx\in \R^d$.}
    \end{equation*} 
\end{enumerate}
\end{lemma}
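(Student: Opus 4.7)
The plan is to handle (i)--(iii) by direct single-hidden-layer constructions exploiting that the identity activation $t$ and the squaring activation $t^2$ belong to the poly-sine-Gaussian family $\mathcal A=\{x,x^2,\sin x,e^{-x^2}\}$, and to prove (iv) by realizing $P(\xx)=x_1^{\alpha_1}\cdots x_d^{\alpha_d}$ as an arithmetic circuit whose binary products are each implemented by the polarization gadget of (iii) and scheduled into $L+\lceil\log_2 N\rceil$ layers of width $2N+d$.

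For (i), I would take one hidden layer of $d$ neurons with activation $t\mapsto t$, weight matrix $I_d$ and zero bias, followed by an identity readout; the network reproduces $\xx\mapsto\xx$ exactly. For (ii), a single hidden neuron with activation $t\mapsto t^2$, input weight $1$ and output weight $1$ reproduces $x\mapsto x^2$. For (iii), the polarization identity $xy=\tfrac14\bigl((x+y)^2-(x-y)^2\bigr)$ quoted in the statement is implemented by two hidden neurons with activation $t^2$ whose pre-activations are $x+y$ and $x-y$ and whose post-activations are combined with weights $(\tfrac14,-\tfrac14)$.

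For (iv), I would partition the width $2N+d$ into $d$ identity channels that keep the input variables $x_1,\dots,x_d$ available at every layer, plus $N$ parallel ``multiplication slots'' of two neurons each; using the gadget from (iii), every slot realizes one binary polarization product per layer whose two operands may be any linear combinations of identity channels and of squared pre-activations produced by slots of the previous layer. The schedule has two phases. In \emph{Phase~1} (the first $L$ layers) I maintain $N$ running partial products: in layer~$1$ each slot multiplies two variables drawn from the identity channels, and in each subsequent layer each slot multiplies its current partial product by one fresh variable drawn from the identity channels. In \emph{Phase~2} (the remaining $\lceil\log_2 N\rceil$ layers) the $N$ partial products are merged by a balanced binary tree of polarization multiplications, with $\lceil N/2^k\rceil$ combinations at the $k$-th Phase~2 layer, all of which fit inside the $N$ available multiplication slots; the final layer outputs $P(\xx)$. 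Slots with unused capacity absorb the neutral factor $1$ (equivalently a bypass implemented with identity activations), so the schedule flexibly accommodates any multi-index with $|\xal|\le N(L+1)$, and in particular any $|\xal|\le NL+2^{\lfloor\log_2 N\rfloor}$ as hypothesized.

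Correctness reduces to an induction on the layer index showing that each slot's intended product is produced at each layer as the correct linear combination of the previous layer's squared pre-activations and identity channels, exactly in the form required by the gadget of (iii). The main obstacle is the combinatorial bookkeeping of the schedule: verifying that the assignment of variables to slots throughout Phase~1 respects the multiplicities $\alpha_1,\dots,\alpha_d$, that the balanced tree of Phase~2 yields the root value $P(\xx)$ regardless of how the $|\xal|$ factors are distributed across the $N$ Phase~1 partial products, and that the widths used at every Phase~2 layer (including any bypass channels for inactive slots) remain within $2N+d$. Once these accounting points are settled, the assembly of the poly-sine-Gaussian network realizing $P$ follows mechanically from the gadgets in (i)--(iii).
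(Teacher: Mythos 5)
Your proposal is correct and follows essentially the same route as the paper: keep $d$ identity channels alive across layers, implement each binary product with the two-neuron polarization gadget of (iii), run $N$ such products per layer for $L$ layers, and finish with a dyadic tree of depth $\lceil\log_2 N\rceil$, giving width $2N+d$ and depth $L+\lceil\log_2 N\rceil$. The only (harmless) difference is bookkeeping: the paper lets the final tree absorb the leftover factor-multiplications under the bound $NL+2^{\lfloor\log_2 N\rfloor}\geq|\xal|$, whereas you consume all factors in the first $L$ layers (capacity $N(L+1)\geq NL+2^{\lfloor\log_2 N\rfloor}$) and use the tree only to merge the $N$ partial products.
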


\begin{proof}
Part (i) to (iii) are trivial. We will only prove Part (iv). In the case of $|\xal|=k\leq 1$, the proof is simple and left for the reader. When $|\xal|=k\ge 2$, the main idea of the proof of (v) can be summarized in Figure \ref{fig:NN1}. By Part (i), we can apply a  poly-sine-Gaussian network to implement a $d$-dimensional identity map. This identity map maintains necessary entries of $\xx$ to be multiplied together. We apply poly-sine-Gaussian networks to implement the multiplication function in Part (iii) and carry out the multiplication $N$ times per layer. After $L$ layers, there are $k-NL\leq N$ multiplications to be implemented. Finally, these at most $N$ multiplications can be carried out with a small poly-sine-Gaussian network in a dyadic tree structure.
\end{proof}

Now we are ready to prove Thm.~\ref{thm:NNex2}.

\begin{proof}
The main idea of the proof is to apply Part (iv) of Lem.~\ref{lem:NNex2} $J$ times to construct $J$ poly-sine-Gaussian networks, $\{\phi_j(\xx)\}_{j=1}^J$, to represent $\xx^{\xal_j}$ and arrange these poly-sine-Gaussian networks as subnetwork blocks to form a larger poly-sine-Gaussian network $\tilde{\phi}(\xx)$ with $ab$ blocks as shown in Figure \ref{fig:NN2}, where each red rectangle represents one poly-sine-Gaussian network $\phi_j(\xx)$ and each blue rectangle represents one poly-sine-Gaussian network of width $1$ as an identity map of $\R$. There are $ab$ red blocks with $a$ rows and $b$ columns. When $ab\geq J$, these subnetwork blocks can carry out all monomials $\xx^{\xal_j}$. In each column, the results of the multiplications of $\xx^{\xal_j}$ are added up to the input of the narrow poly-sine-Gaussian network, which can carry the sum over to the next column. After the calculation of $b$ columns, $J$ additions of the monomials $\xx^{\xal_j}$ have been implemented, resulting in the output $P(\xx)$.

By Part (iv) of Lem.~\ref{lem:NNex2}, for any $N\in \N^+$, there exists a poly-sine-Gaussian network $\phi_j(\xx)$ of width $d+2N$ and depth $L_j = \lceil \frac{|\xal_j|}{N}\rceil +\lceil \log_2 N \rceil$ to implement $\xx^{\xal_j}$. Since $b\max_j L_j\leq b\left(  \frac{\max_j |\xal_j|}{N} +2 + \log_2 N \right)$, there exists a poly-sine-Gaussian network $\tilde{\phi}(\xx)$ of depth $b\left(  \frac{\max_j |\xal_j|}{N} + 2 + \log_2 N \right)$ and width $da+2Na+1$ to implement $P(\xx)$ as in Figure \ref{fig:NN2}. Note that the total width of each column of blocks is $ad+2Na+1$ but in fact this width can be reduced to $d+2Na+1$, since the red blocks in each column can share the same identity map of $\R^d$ (the blue part of Figure \ref{fig:NN1}).

Note that $b\left(  \frac{\max_j |\xal_j|}{N} + 2 + \log_2 N \right)\leq L$ is equivalent to $(L-2b-b\log_2 N)N\geq b \max_j |\xal_j|$. Hence, for any $N,L,a,b\in \N^+$ such that $ab\geq J$ and $(L-2b-b\log_2 N)N\geq b \max_j |\xal_j|$, there exists a poly-sine-Gaussian network $\phi(\xx)$ with width $2Na+d+1$ and depth $L$ such that $\tilde{\phi}(\xx)$ is a subnetwork of $\phi(\xx)$ in the sense of $\phi(\xx)=\text{Id}\circ\tilde{\phi}(\xx)$ with $\text{Id}$ as an identify map of $\R$, which means that $\phi(\xx)=\tilde{\phi}(\xx)=P(\xx)$. The proof of Part (v) is completed.
\end{proof}

	\begin{figure}[!ht]
		\centering
		\includegraphics[width=0.6\linewidth]{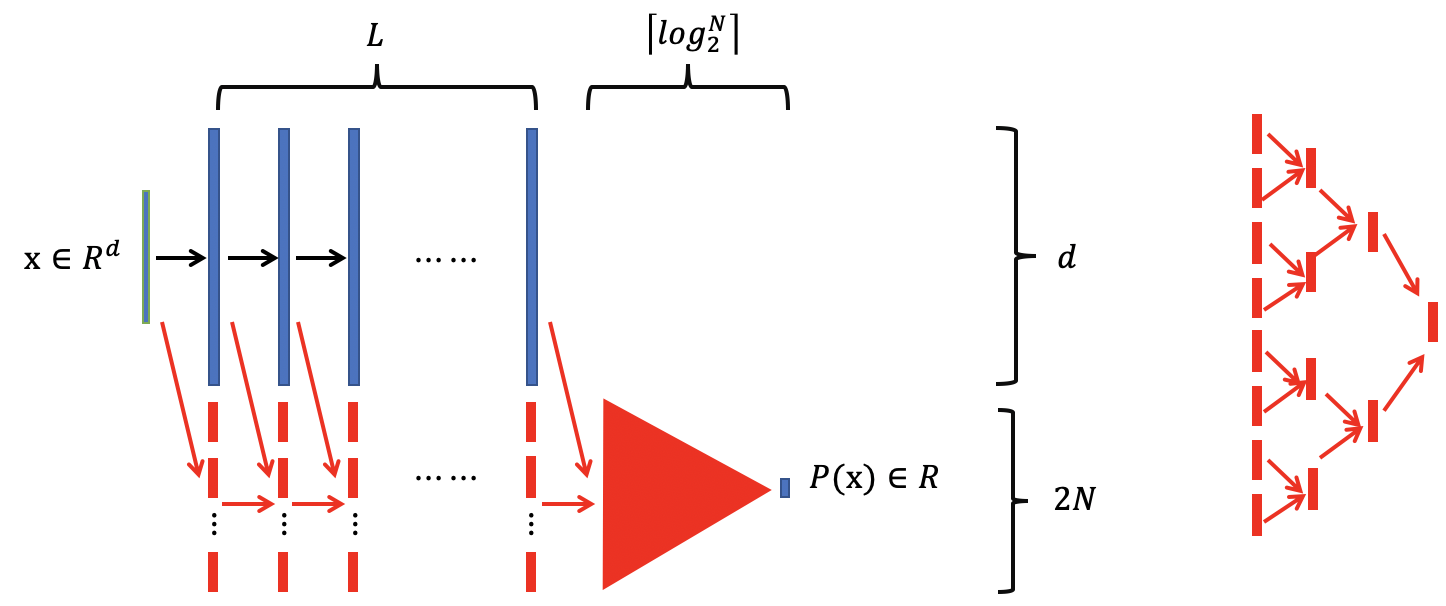}
		\caption{Left: An illustration of the proof of Lem.~\ref{lem:NNex2} (iv). Green vectors represent the input and output of the poly-sine-Gaussian network carrying out $P(\xx)$. Blue vectors represent the poly-sine-Gaussian network that implements a $d$-dimensional identity map in Part (i), which was repeatedly applied for $L$ times. Black arrows represent the data flow for carrying out the identity maps. Red vectors represent the poly-sine-Gaussian networks implementing the multiplication function in Part (iii) and there are $NL$ such red vectors. Red arrows represent the data flow for carrying out the multiplications. Finally, a red triangle represents a poly-sine-Gaussian network of width at most $2N$ and depth at most $\lceil \log_2^N \rceil$ carrying out the rest of the multiplications. Right: An example of the red triangle is given on the right when it consists of $15$ red vectors carrying out $15$ multiplications.}
		\label{fig:NN1}
	\end{figure}

	\begin{figure}[!ht]
		\centering
		\includegraphics[width=0.5\linewidth]{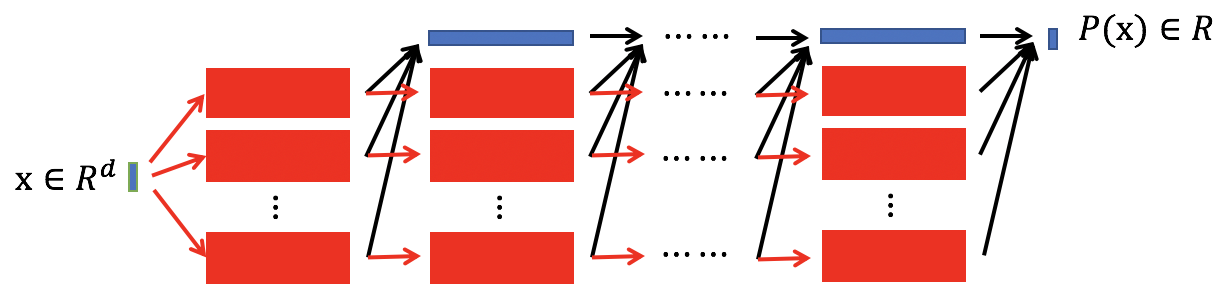}
		\caption{An illustration of the proof of Thm.~\ref{thm:NNex2}. Green vectors represent the input and output of the poly-sine-Gaussian network $\tilde{\phi}(\xx)$ carrying out $P(\xx)$. Each red rectangle represents one poly-sine-Gaussian network $\phi_j(\xx)$ and each blue rectangle represents one poly-sine-Gaussian network of width $1$ as an identity map of $\R$. There are $ab\geq J$ red blocks with $a$ rows and $b$ columns. When $ab\geq J$, these subnetwork blocks can carry out all monomials $\xx^{\xal_j}$. In each column, the results of the multiplications of $\xx^{\xal_j}$ are added up to (indicated by black arrows) the input of the narrow poly-sine-Gaussian network, which can carry the sum over to the next column. Each red arrow passes $\xx$ to the next red block. After the calculation of $b$ columns, $J$ additions of the monomials $\xx^{\xal_j}$ have been implemented, resulting in the output $P(\xx)$.}
		\label{fig:NN2}
	\end{figure}

\subsection{Proof of Thm.~\ref{thm:analytic}}
\begin{proof}
Let $M\geq1$, $s>1$, $C_f>0$ and $0<\epsilon<1$ be four scalars, and $f$ be an analytic function defined on $[-M,M]$ that is analytically continuable to the open Bernstein $s$-ellipse $E_{s}^M$, where it satisfies $\vert f(x)\vert\leq C_f$. We first approximate $f$ by a truncated Chebyshev series $f_n$, and then approximate $f_n$ by a poly-sine-Gaussian network $\phi$ using Thm.~\ref{thm:NNex2}.

Since $f$ is analytic in the open Bernstein $s$-ellipse $E_s^M$ then, for any integer $n\geq2$,
\begin{align*}
\left\Vert f_n(x)-f(x)\right\Vert_{L^\infty([-M,M])}\leq\frac{2C_fs^{-n}}{s-1} = \OO\left(C_fs^{-n}\right).
\end{align*}
Therefore, if we take $n=\OO\left(\frac{1}{\log_2s}\log_2\frac{2C_f}{\epsilon}\right)$, then the above term is bounded by $\epsilon$.

Let us now approximate $f_n$ by a poly-sine-Gaussian network $\phi$. We first write
\begin{align*}
f_n(x) = \sum_{k=0}^nc_kT_k\left(\frac{x}{M}\right), 
\end{align*}
with 
\begin{align}
\underset{0\leq k\leq n}{\max}\vert c_k\vert = \OO\left(C_fs\right),\;\text{via Thm.~8.1 in  \cite{trefethen2013}}.
\label{eq:coefficiens}
\end{align}
Since, $f_n$ is a polynomial of degree $n$, by Thm.~\ref{thm:NNex2} with $d=1$, $a=1$, and $b=n+1$, there exists a poly-sine-Gaussian network $\phi$ with width $2N+2$ and depth $L$ such that
\begin{align*}
\phi(x) = f_n(x)
\end{align*}
for $x\in\R$, as long as $N$ and $L$ satisfy $(L-2n-2-(n+1)\log_2 N)N\geq n(n+1)$.
This yields
\begin{align*}
\vert\phi(x) - f(x)\vert 
& =  \vert f_n(x) - f(x)\vert \leq  \epsilon.
\end{align*}
\end{proof}

\subsection{Proof of Thm.~\ref{thm:bandlimited}}
To show the approximation of poly-sine-Gaussian networks to generalized bandlimited functions, we will need Maurey's unpublished theorem below. It was used to study shallow network approximation by Barron in \cite{barron1993}. 

\begin{theorem}[Maurey's theorem]\label{thm:maurey}
Let $H$ be a Hilbert space with norm $\Vert\cdot\Vert$. 
Suppose there exists $G\subset H$ such that for every $g\in G$, $\Vert g\Vert\leq b$ for some $b>0$. 
Then, for every $f$ in the convex hull of $G$ and every integer $n\geq 1$, there is a $f_n$ in the convex hull of $n$ points in $G$ and a constant $c>b^2-\Vert f\Vert^2$ such that $\Vert f - f_n\Vert^2\leq \frac{c}{n}$.
\end{theorem}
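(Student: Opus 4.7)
The plan is the classical probabilistic method (Maurey's randomization argument). I would use the fact that convex combinations can be interpreted as expectations with respect to probability measures on $G$, then apply independence in a Hilbert space to get the $1/n$ rate on the variance of a sample mean, and finally invoke the probabilistic method to pull out a deterministic $f_n$.

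First, I would reduce to the case where $f$ is a finite convex combination. Since $f$ lies in the convex hull of $G$, I can write $f=\sum_{i=1}^{M}\lambda_i g_i$ for some finite $M$, with $g_i\in G$, $\lambda_i\ge 0$, and $\sum_i\lambda_i=1$. (If instead the statement were interpreted to include the closure, I would first pick a finite combination $f'$ within $\epsilon$ of $f$ and absorb the $\epsilon$ into the slack $c>b^2-\|f\|^2$; this is exactly where the strict inequality in the constant $c$ becomes useful.) Then I would let $X_1,\dots,X_n$ be i.i.d.\ $H$-valued random elements with $\Pr(X_k=g_i)=\lambda_i$, so that $\E[X_k]=f$ and $\|X_k\|\le b$ almost surely.

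Next, I would set $f_n=\frac{1}{n}\sum_{k=1}^{n}X_k$, which by construction is a convex combination of at most $n$ elements of $G$ (those actually drawn). I would compute $\E\|f-f_n\|^2$ by expanding in the Hilbert space inner product:
\begin{equation*}
\E\|f-f_n\|^2=\E\Bigl\|\tfrac{1}{n}\sum_{k=1}^n(X_k-f)\Bigr\|^2=\tfrac{1}{n^2}\sum_{k,\ell}\E\langle X_k-f,\,X_\ell-f\rangle.
\end{equation*}
Because the $X_k$ are independent and each satisfies $\E[X_k-f]=0$, the cross terms ($k\ne\ell$) vanish, leaving
\begin{equation*}
\E\|f-f_n\|^2=\tfrac{1}{n}\,\E\|X_1-f\|^2=\tfrac{1}{n}\bigl(\E\|X_1\|^2-\|f\|^2\bigr)\le\tfrac{b^2-\|f\|^2}{n}.
\end{equation*}

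Finally I would apply the probabilistic method: since the expectation of the nonnegative random quantity $\|f-f_n\|^2$ is at most $(b^2-\|f\|^2)/n$, there must exist at least one realization (a deterministic choice of $n$ points in $G$ and thus a deterministic $f_n$ in the convex hull of those $n$ points) for which $\|f-f_n\|^2\le(b^2-\|f\|^2)/n<c/n$ for any $c>b^2-\|f\|^2$, giving the claim. The only delicate point, and the place where the main bookkeeping effort lies, is the approximation step when $f$ is merely in the closure of the convex hull rather than a finite combination; there I would choose the initial finite-combination approximant $f'$ close enough that $\|f-f'\|^2+(b^2-\|f'\|^2)/n$ is still bounded by $c/n$, using the strict inequality $c>b^2-\|f\|^2$ to provide the needed slack.
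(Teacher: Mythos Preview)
Your argument is the standard Maurey--Barron randomization proof and is correct: the variance computation $\E\|f-f_n\|^2=(1/n)(\E\|X_1\|^2-\|f\|^2)\le (b^2-\|f\|^2)/n$ and the subsequent application of the probabilistic method are exactly right, and your remark about handling the closure via the slack in $c>b^2-\|f\|^2$ is a sensible addition (though the theorem as stated concerns the convex hull itself, not its closure, so that step is not strictly needed here).

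There is nothing to compare against in the paper: the paper does not prove Maurey's theorem. It introduces it as ``Maurey's unpublished theorem,'' cites Barron (1993), and the \texttt{proof} environment that immediately follows the statement is in fact the proof of Theorem~\ref{thm:bandlimited}, which merely \emph{invokes} Maurey's theorem as a black box. So your proposal supplies a proof where the paper provides none.
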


\begin{proof}
Let $f$ be an arbitrary function in $\mathcal{H}_{K,M}$, and $\mu$ be an arbitrary measure. Let $F(\bm{w})=\vert F(\bm{w})\vert e^{i\theta(\bm{w})}$.
Since $f$ is real-valued, we may write
\begin{align*}
f(\bm{x}) & =  \mrm{Re}\,\Bigg(\int_{\R^d}C_F e^{i\theta(\bm{w})}K(\bm{w}\cdot\bm{x})\frac{\vert F(\bm{w})\vert}{C_F} d\bm{w}\Bigg), \\
& = \int_{[-M,M]^d}C_F\Bigg[\cos(\theta(\bm{w}))K_R(\bm{w}\cdot\bm{x})-\sin(\theta(\bm{w}))K_I(\bm{w}\cdot\bm{x})\Bigg]\frac{\vert F(\bm{w})\vert}{C_F} d\bm{w},
\end{align*}
{where $K_R(\bm{w}\cdot\bm{x})=\mrm{Re}(K(\bm{w}\cdot\bm{x}))$ and $K_I(\bm{w}\cdot\bm{x})=\mrm{Im}(K(\bm{w}\cdot\bm{x}))$.} The integral above represents $f$ as an infinite convex combination of functions in the set
\begin{align*}
G_{K,M}  = \Big\{\gamma\big[\cos(\beta)\mrm{Re}(K(\bm{w}\cdot\bm{x}))-\sin(\beta)\mrm{Im}(K(\bm{w}\cdot\bm{x}))\big],\,\vert\gamma\vert\leq C_F,\,\beta\in\R,\,\bm{w}\in[-M,M]^d\Big\}.
\end{align*}
Therefore, $f$ is in the closure of the convex hull of $G_{K,M}$.
Since functions in $G_{K,M}$ are bounded in the $L^2(\mu,B)$-norm by $2C_FD_K\sqrt{\mu(B)}\leq2C_F\sqrt{\mu(B)}$, Thm.~\ref{thm:maurey} tells us that there exist real coefficients $b_j$'s and $\beta_j$'s such that\footnote{We use Thm.~\ref{thm:maurey} with $b=2C_F\sqrt{\mu(B)}$, $c=b^2>b^2 - \Vert f\Vert^2$, and $\Vert\cdot\Vert=\Vert\cdot\Vert_{L^2(\mu, B)}$.} 
\begin{align*}
f_{\epsilon_0}(\bm{x}) 
= \sum_{j=1}^{\ceil{1/\epsilon_0^2}}b_j\big[\cos(\beta_j)K_R(\bm{w}\cdot\bm{x}) - \sin(\beta_j)K_I(\bm{w}\cdot\bm{x})\big],\quad\sum_{j=1}^{\ceil{1/\epsilon_0^2}}\vert b_j\vert \leq C_F,
\end{align*}
for some $0<\epsilon_0<1$ to be determined later, such that
\begin{align*}
\left\Vert f_{\epsilon_0}(\bm{x}) - f(\bm{x})\right\Vert_{L^2(\mu, B)}\leq 2C_F\sqrt{\mu(B)}\epsilon_0.
\end{align*}

We now approximate $f_{\epsilon_0}(\bm{x})$ by a poly-sine-Gaussian network $\phi(\bm{x})$. {Note that $K_R$ and $K_I$ are both analytic and satisfy the same assumptions as $K$. Using Theorem} \ref{thm:analytic}{, they can be approximated to accuracy $\epsilon_0$ using networks $\wKK_R$ and $\wKK_I$ of width and depth}
\begin{align*}
\OO\left(\frac{1}{\log_2s}\log_2\frac{C_K}{\epsilon_0}\right) \quad\text{and}\quad \OO\left(\left(\frac{1}{\log_2s}\log_2\frac{C_K}{\epsilon_0}\right) \log_2 \log_2\frac{C_K}{\epsilon_0}     \right),
\end{align*}
respectively. We define the poly-sine-Gaussian network $\phi(\bm{x})$ by
\begin{align*}
\phi(\bm{x}) = \sum_{j=1}^{\ceil{1/\epsilon_0^2}}b_j\big[\cos(\beta_j)\wKK_R(\bm{w}\cdot\bm{x}) - \sin(\beta_j)\wKK_I(\bm{w}\cdot\bm{x})\big].
\end{align*}
This network has width $\OO\left(\frac{1}{\epsilon_0^2\log_2s}\log_2\frac{C_K}{\epsilon_0}\right)$ and depth $\OO\left(\left(\frac{1}{\log_2s}\log_2\frac{C_K}{\epsilon_0}\right) \log_2 \log_2\frac{C_K}{\epsilon_0}     \right)$, and
\begin{align*}
\vert \phi(\bm{x}) - f_{\epsilon_0}(\bm{x})\vert 
& \leq\sum_{j=1}^{\ceil{\frac{1}{\epsilon_0^2}}}\vert b_j\vert\vert\wKK_R(\bm{w}_j\cdot \bm{x})-K_R(\bm{w}_j\cdot\bm{x})\vert
+ \sum_{j=1}^{\ceil{\frac{1}{\epsilon_0^2}}}\vert b_j\vert\vert\wKK_I(\bm{w}_j\cdot \bm{x})-K_I(\bm{w}_j\cdot\bm{x})\vert\leq 2C_F\epsilon_0,
\end{align*}
which yields
\begin{align*}
\left\Vert\phi(\bm{x}) - f_{\epsilon_0}(\bm{x})\right\Vert_{L^2(\mu, B)}\leq 2C_F\sqrt{\mu(B)}\epsilon_0.
\end{align*}

The total approximation error satisfies
\begin{align*}
\left\Vert\phi(\bm{x}) - f(\bm{x})\right\Vert_{L^2(\mu, B)}\leq 4C_F\sqrt{\mu(B)}\epsilon_0.
\end{align*}
We take 
\begin{align*}
\epsilon_0=\frac{\epsilon}{4C_F\sqrt{\mu(B)}}
\end{align*}
to complete the proof.
\end{proof}

\subsection{Proof of Lemma~\ref{lem:NNex3}}
\begin{proof}
The proof of this lemma is simple by three facts: 1) the affine linear transforms before activation functions can play the role of translation and dilation in the spatial and Fourier domains; 2) the Gaussian activation function plays the role of localization in the transforms in this lemma; 3) Lem.~\ref{lem:NNex2} shows that the $x^2$ activation function can reproduce multiplication.
\end{proof}

\subsection{Proof of Lemma~\ref{lem:NNex4}}
\begin{proof}
The proof of this lemma is trivial by Lem.~\ref{lem:NNex2}, Thm.~\ref{thm:NNex2}, and the proof of Thm.~\ref{thm:analytic}.
\end{proof}

\section{Implementation details}
\subsection{Scientific computing}
The overall setting for all examples is summarized as follows.
\begin{itemize}
  \item \textbf{Environment.}
  The experiments are performed in Python 3.7 environment. We utilize PyTorch library for neural network implementation and CUDA 10.0 toolkit for GPU-based parallel computing. 
  \item \textbf{Optimizer.}
  In all examples, the optimization problems are solved by {\em Adam} subroutine from PyTorch library with default hyper-parameters. This subroutine implements the Adam algorithm in \cite{Kingma2014}.
  \item \textbf{Learning rate.}
 The learning rate will be decreased step by step in all examples following the formula
     \begin{equation}
     \tau_n=\tau_0 * q^{\floor{\frac{n}{s}}}, 
     \end{equation}
where $\tau_n$ is the learning rate in the $n-$th iteration, $q$ is a factor set to be $0.95$, and $s$ means that we update learning rate after  $s$ steps.
   \item \textbf{Numbers of samples.} The numbers of training and testing samples for regression and PDE problems are $10,000$. The numbers of training and testing samples for eigenvalue problems are $2048$ following the approach in \cite{HAN2020}.
  \item \textbf{Network setting.}
  In all PDE examples, we construct a special network that satisfies the given boundary condition as discussed in Section \ref{sec:LSM}.
In all examples, we apply ResNet with two residual blocks and each block contains two hidden layers. The width is set as $50$ unless specified. Unless specified particularly, all weights and biases in the $\ell$-th layer are initialized by $U(-\sqrt{N_{\ell-1}},\sqrt{N_{\ell-1}})$, where $N_{\ell-1}$ is the width of the $\ell-1$-th layer. Note that the network with RAFs can be expressed by a network with a single activation function in each neuron but different neurons can use different activation functions. For example, in the case of poly-sine-Gaussian networks, we will use $1/4$ neurons within each layer with $x$ activation function, $1/4$ with $x^2$, $1/4$ with $\sin(x)$, and $1/4$ with $\text{exp}(-x^2)$ for coding simplicity. In the case of poly-sine networks, $1/3$ neurons for each $x$, $x^2$, and $\sin(x)$ activation functions. In this new setting, it is not necessary to train extra combination coefficients in the RAF. Though training the scaling parameters in the RAF might be beneficial in general applications, we focus on justifying the poly-sine-Gaussian activation function without emphasizing the scaling parameters. Hence, in almost all tests in Part I, the scaling parameters are set to be one for $x$, $x^2$, and $\sin(x)$, and the scaling parameter is set to be $0.1$ for $\text{exp}(-x^2)$. In the case of oscillatory target functions, we specify the scaling parameter of $\sin(x)$ to introduce oscillation in the NTK as we shall discuss and improved performance is observed. The idea of scaling parameters has been tested and verified in \cite{JAGTAP2020109136,Ameya}. 
\item \textbf{Performance Evaluation.} We will adopt two criteria to quantify the performance of different activation functions. The first one is the relative $L^2$   error on test samples. Note that the ground truth solution is not available in real applications and, hence, it is not known when to stop the training. Therefore, we will keep the best historical $L^2$ test error and the best historical moving-average $L^2$ test error. In the moving-average error calculation, the error at a given iteration is the average $L^2$ test error of $100$ previous iterations. The second criterion is the condition number of the NTK matrices. A smaller condition number usually leads to a smaller iteration number to achieve the same accuracy.  
 \end{itemize}
}
\end{document}